\documentclass{article}

\usepackage[preprint]{icml2026} 

\usepackage{microtype}
\usepackage{graphicx}
\usepackage{subcaption}
\usepackage{booktabs} 
\usepackage{amsmath}
\usepackage{amssymb}
\usepackage{mathtools}
\usepackage{enumerate}
\usepackage{enumitem}
\usepackage{amsthm}
\usepackage{tabularx}
\usepackage{multirow}
\usepackage{array}
\usepackage{makecell}
\usepackage[table]{xcolor}
\definecolor{ICMLBlue}{RGB}{25, 88, 166}
\definecolor{ICMLPurple}{RGB}{150, 0, 80}
\usepackage[colorlinks,linkcolor=ICMLPurple,citecolor=ICMLBlue,urlcolor=black]{hyperref}
\usepackage{bm}

\usepackage[capitalize,noabbrev]{cleveref}

\theoremstyle{plain}
\newtheorem{theorem}{Theorem}[section]

\newtheorem{lemma}[theorem]{Lemma}

\theoremstyle{definition}
\newtheorem{definition}[theorem]{Definition}

\theoremstyle{remark}

\usepackage[textsize=tiny]{todonotes}

\renewcommand{\paragraph}[1]{\noindent\textbf{#1}~}

\begin{document}

\icmltitlerunning{Weight-Informed Self-Explaining Clustering for Mixed-Type Tabular Data}

\twocolumn[
  \icmltitle{Weight-Informed Self-Explaining Clustering for Mixed-Type Tabular Data}




  \begin{icmlauthorlist}
    \icmlauthor{Lehao Li}{nus}
    \icmlauthor{Qiang Huang}{hitsz}
    \icmlauthor{Yihao Ang}{nus}
    \icmlauthor{Bryan Kian Hsiang Low}{nus}
    \icmlauthor{Anthony K. H. Tung}{nus}
    \icmlauthor{Xiaokui Xiao}{nus}
  \end{icmlauthorlist}

  \icmlaffiliation{nus}{School of Computing, National University of Singapore, Singapore}
  \icmlaffiliation{hitsz}{School of Intelligence Science and Engineering, Harbin Institute of Technology (Shenzhen), China}

  \icmlcorrespondingauthor{Qiang Huang}{huangqiang@hit.edu.cn}

  \icmlkeywords{Tabular Data Clustering, Interpretability, Feature Representation}

  \vskip 0.3in
]



\printAffiliationsAndNotice{}  

\begin{abstract}
Clustering mixed-type tabular data is fundamental for exploratory analysis, yet remains challenging due to misaligned numerical-categorical representations, uneven and context-dependent feature relevance, and disconnected and post-hoc explanation from the clustering process. 
We propose \textbf{WISE}, a \textbf{W}eight-\textbf{I}nformed \textbf{S}elf-\textbf{E}xplaining framework that unifies representation, feature weighting, clustering, and interpretation in a fully unsupervised and transparent pipeline. 
\textbf{WISE} introduces Binary Encoding with Padding (BEP) to align heterogeneous features in a unified sparse space, a Leave-One-Feature-Out (LOFO) strategy to sense multiple high-quality and diverse feature-weighting views, and a two-stage weight-aware clustering procedure to aggregate alternative semantic partitions. 
To ensure intrinsic interpretability, we further develop Discriminative FreqItems (DFI), which yields feature-level explanations that are consistent from instances to clusters with an additive decomposition guarantee. 
Extensive experiments on six real-world datasets demonstrate that \textbf{WISE} consistently outperforms classical and neural baselines in clustering quality while remaining efficient, and produces faithful, human-interpretable explanations grounded in the same primitives that drive clustering.
\end{abstract}


\section{Introduction}
\label{sec:intro}

Clustering is a cornerstone of exploratory data analysis, enabling structure discovery, cohort formation, and hypothesis generation in the absence of labels \cite{jain2010data, xu2005survey, huang2023kfreqitems}.
In modern applications, many high-value datasets are mixed-type tabular tables, combining both numerical and categorical attributes across domains such as demography, privacy analytics, electronic health records, intrusion detection, behavioral marketing, and financial risk assessment \cite{huang1997kprototypes, cad, ang2024eads, gorishniy2021revisiting, shwartz2022tabular}.
Effective clustering in this setting is particularly valuable: it reveals latent subpopulations, guides feature engineering, and produces cohorts that downstream tasks, ranging from supervised modeling to causal analysis and auditing, can directly leverage

Despite decades of progress, clustering mixed-type tabular data remains challenging due to three tightly coupled issues.


\paragraph{(1) Representation Misalignment}
A common strategy is to one-hot encode categorical attributes, concatenate them with normalized numerical features, and apply mixed-type distances (e.g., Gower \citep{gower1971gower-distance}) or prototype-based objectives such as $k$-Prototypes \cite{huang1997kprototypes}.
However, this coupling is inherently brittle: one-hot encoding expands categorical attributes into high-dimensional, sparse vectors, while numerical features remain low-dimensional and ordered, forcing a single global scaling heuristic to reconcile incompatible geometries \cite{jain2010data}.
An alternative line of work embeds heterogeneous features into dense latent spaces and performs clustering via $k$-Means or deep objectives \cite{guo2016entityembeddings, somepalli2021saint, huang2020tabtransformer, xie2016DEC}.
While effective for predictive modeling \cite{arik2021tabnet, gorishniy2021revisiting, rauf2025TableDC, svirsky2024IDC}, these approaches shift similarity to latent dimensions whose semantics are \emph{opaque}, difficult to align across feature types, and poorly suited for feature-level weighting or explanation in the original table \cite{shwartz2022tabular}.

\paragraph{(2) Uneven and Context-Dependent Feature Relevance}
Real-world tabular data rarely exhibits uniform feature importance: a small subset of attributes often dominates similarity, while many others are weak, redundant, or relevant only for specific subpopulations.
Yet most clustering objectives implicitly weight all dimensions equally once a representation is fixed, diluting informative features and obscuring alternative but valid clusterings driven by different feature subsets \cite{agrawal1998clique, aggarwal1999proclus, parsons2004subspace}.
Metric learning and constraint-based clustering can adapt distances, but they typically rely on side information and do not yield multiple, diverse feature weightings \cite{xing2002metric, bilenko2004constraints}.
Deep tabular embeddings face a related limitation, as latent dimensions are usually aggregated isotropically, making it difficult to audit which original features actually drive separation \cite{gorishniy2021revisiting, grinsztajn2022tree, shwartz2022tabular}.

\paragraph{(3) Disconnected and Post-Hoc Explanations}
Although substantial work explains black-box predictions \cite{ribeiro2016LIME, lundberg2017shap, lundberg2020treeshap}, and advances interpretable clustering \cite{peng2022tell, lawless2022multipolytope, moshkovitz2020explainable}, explanations for mixed-type tabular clustering are often \emph{post hoc}, disconnected from the similarity computations, or defined in latent spaces.
As a result, cluster-level summaries do not reliably decompose into instance-level rationales in the original feature space \cite{rauf2025TableDC, svirsky2024IDC, spagnol2024counterfactualclustering}.
This gap matters in practice, where analysts require both \emph{global narratives} (what defines a cohort) and \emph{local justifications} (why a record belongs to it), derived from the same primitives that produce the clustering.

To address these challenges, we introduce \textbf{WISE} (\textbf{W}eight-\textbf{I}nformed \textbf{S}elf-\textbf{E}xplaining), a unified framework for clustering mixed-type tabular data, guided by three principles:
(1) \textbf{Granularity alignment}, mapping numerical and categorical attributes into a shared, type-respecting discrete space;
(2) \textbf{Feature weighting as a first-class object}, sensing multiple, diverse feature-weightings directly from data to guide clustering under alternative semantic emphases; and 
(3) \textbf{Intrinsic interpretability}, embedding explanations into the clustering mechanism itself with instance-to-cluster consistency and an additive, Shapley-style decomposition \cite{shapley1953value, lundberg2017shap}.

\paragraph{Contributions}
Our main contributions are summarized as:
\begin{itemize}[nolistsep]
  \vspace{-0.5em}
  \item \textbf{Granularity-Aligned Mixed-Type Representation:} 
  We propose \emph{Binary Encoding with Padding (BEP)}, a unified sparse encoding that preserves categorical semantics and numerical order without a global numeric-categorical balance parameter.
  
  \item \textbf{Diverse Feature-Weight Sensing:} 
  We design a \emph{Leave-One-Feature-Out (LOFO)} Random Forest framework that extracts multiple high-quality, diverse feature-weight vectors from reconstruction-based attributions.
  
  \item \textbf{Intrinsic and Consistent Interpretability:} 
  We develop \emph{Discriminative FreqItems (DFI)}, which ties explanations directly to clustering primitives and yields coherent feature-level attributions across instance and cluster levels with a theoretical guarantee.
\end{itemize}
\vspace{-0.5em}

\section{Related Work}
\label{sec:related_work}

\begin{figure*}[t]
  \centering
  \includegraphics[width=0.99\textwidth]{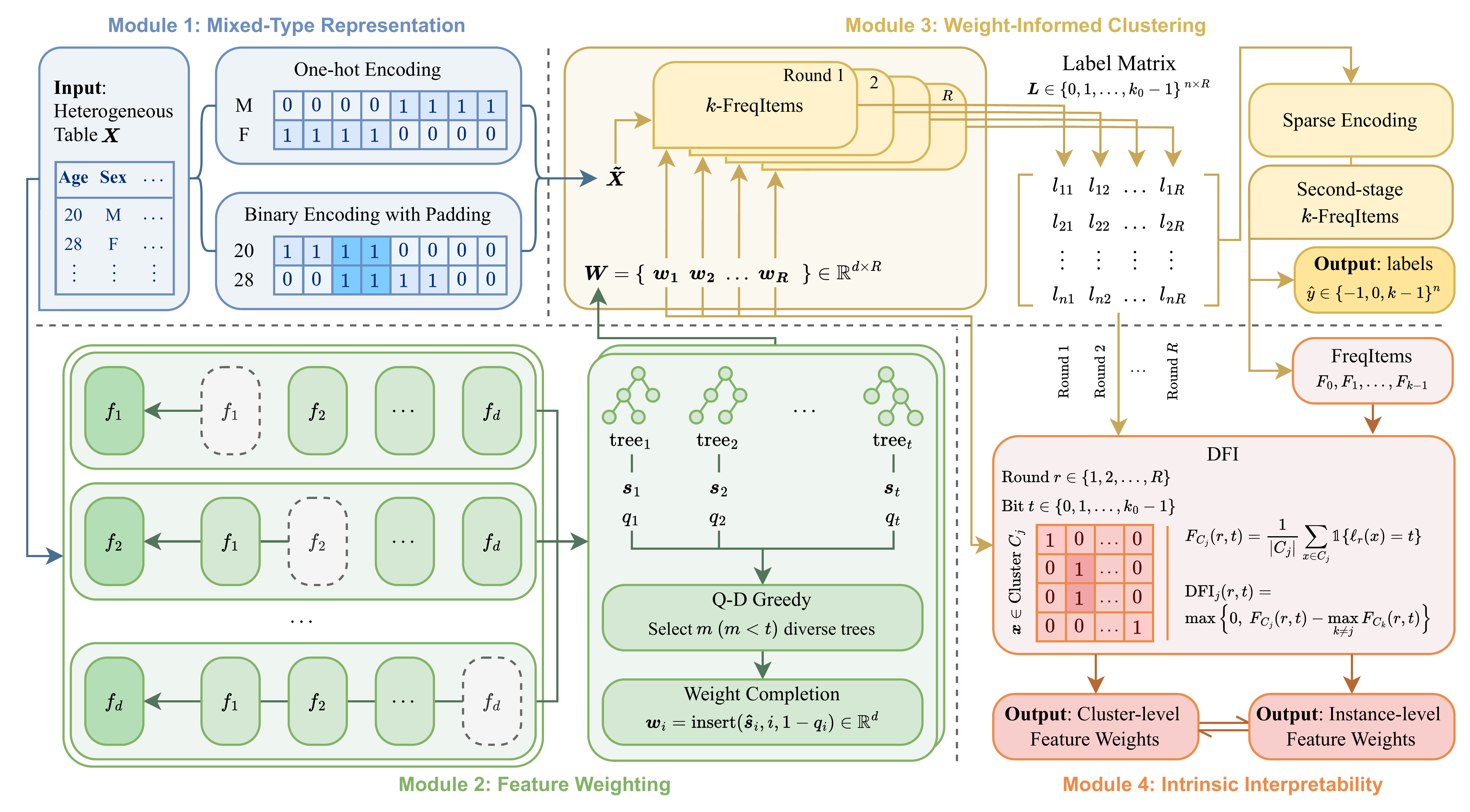}
  \vspace{-0.5em}
  \caption{Overview of \textbf{WISE}. It consists of four modules: \textbf{Module~1} converts mixed-type tabular data into a unified representation using Binary Encoding with Padding (BEP); \textbf{Module~2} senses and selects diverse feature-weight vectors via a Leave-One-Feature-Out (LOFO) strategy; \textbf{Module~3} aggregates multiple weighted views through a two-stage, weight-informed clustering procedure; Finally, \textbf{Module~4} produces intrinsic and consistent explanations at both the cluster and instance levels using Discriminative FreqItems (DFI).} 
  \label{fig:pipeline_WISE}
  \vspace{-0.5em}
\end{figure*}

\paragraph{Mixed-Type Tabular Clustering}
Clustering mixed-type tabular data has a long history, but most methods prioritize partition quality over interpretability.
Classical algorithms such as \textbf{$k$-Prototypes} \citep{huang1997kprototypes} extend $k$-Means \citep{macqueen1967some, lloyd1982least} using hybrid numerical-categorical dissimilarities, while general frameworks use mixed-type similarities like Gower distance \citep{gower1971gower-distance} for clustering (e.g., DBSCAN \citep{ester1996dbscan}).
Recent learning-based methods embed heterogeneous features into latent spaces before clustering; for example, \textbf{TableDC} \citep{rauf2025TableDC} employs autoencoder-based architectures.
Although effective for separation, these methods define clusters via latent embeddings or prototypes, offering limited insight into which original features distinguish clusters.

\paragraph{Interpretable Clustering}
Motivated by this limitation, interpretable clustering has received growing attention.
Intrinsically transparent models include \textbf{TELL} \citep{peng2022tell}, which reformulates $k$-Means as a white-box neural network, and \textbf{IDC} \citep{svirsky2024IDC}, which jointly learns cluster assignments and sparse feature gates.
Declarative approaches further impose explicit structures such as polytope regions \citep{lawless2022multipolytope} or axis-aligned rules \citep{moshkovitz2020explainable}, while post-hoc techniques such as counterfactual explanations explain clustering outcomes without modifying the algorithm \citep{spagnol2024counterfactualclustering}. 
However, most existing methods assume \emph{homogeneous} (typically numeric) features or impose restrictive structures, decoupling interpretability from the clustering objective and limiting applicability to mixed-type tables.

\paragraph{Feature Representation and Weighting}
Feature representation and weighting pose additional challenges.
Common encodings, including learned categorical embeddings \cite{rendle2010FM, cheng2016wide, guo2016entityembeddings, guo2017deepfm, huang2020tabtransformer}
and target encoding \cite{micci2001target-encoding}, often rely on supervision or yield dense, weakly interpretable representations.
Attention-based tabular models (e.g., \textbf{SAINT} \citep{somepalli2021saint}) are optimized for supervised prediction rather than unsupervised clustering.
Sparse methods such as \textbf{$k$-FreqItems} \citep{huang2023kfreqitems} improve interpretability for binary data but do not address heterogeneous feature weighting.
Meanwhile, feature relevance techniques from supervised learning \cite{lundberg2017shap, arik2021tabnet} do not readily transfer to clustering due to the absence of labels.
\textbf{WISE} fills these gaps by integrating representation, feature weighting, clustering, and explanation into a unified, fully unsupervised framework for mixed-type tabular data.
By operating directly on semantically meaningful feature-level signals, it yields human-interpretable cluster descriptions without opaque latent spaces or post-hoc explanations.

\section{The WISE Framework}
\label{sec:method}

We introduce \textbf{WISE}, a weight-informed self-explaining framework for clustering mixed-type tabular data, which unifies representation, feature weighting, clustering, and explanation in a fully interpretable manner (Figure \ref{fig:pipeline_WISE}).

\subsection{Module 1: Mixed-Type Representation}
\label{sec:method:BEP}

\paragraph{Challenges}
Clustering mixed-type tabular data requires representations that are both unified and feature-interpretable. 
Existing solutions largely follow two paradigms: distance-based approaches \cite{gower1971gower-distance} reconcile numerical and categorical features via heuristic weighting across incompatible geometries, while neural tabular encoders \cite{lundberg2017shap, arik2021tabnet, rauf2025TableDC} embed all features into dense latent spaces, enabling standard distances but obscuring feature semantics and interpretability.

\paragraph{Binary Encoding with Padding (BEP)}
To overcome these limitations, we propose a BEP scheme, which maps both numerical and categorical features into a \emph{single sparse binary space}.
This unified representation supports feature-level weighting and similarity computation via weighted Jaccard distance, while preserving a direct and interpretable correspondence to the original tabular features.

Concretely, each feature is encoded as a fixed-length binary vector of $2B$ bits. 
Categorical features are represented using standard one-hot encoding. 
For numeric features, we propose a positional padding scheme: each value is mapped to a contiguous block of $B$ ones, whose offset reflects its relative magnitude within the column. 
Smaller values align to the left, larger values to the right, with intermediate values placed proportionally in between. 

\begin{figure}[t]
  \centering
  \includegraphics[width=0.8\columnwidth]{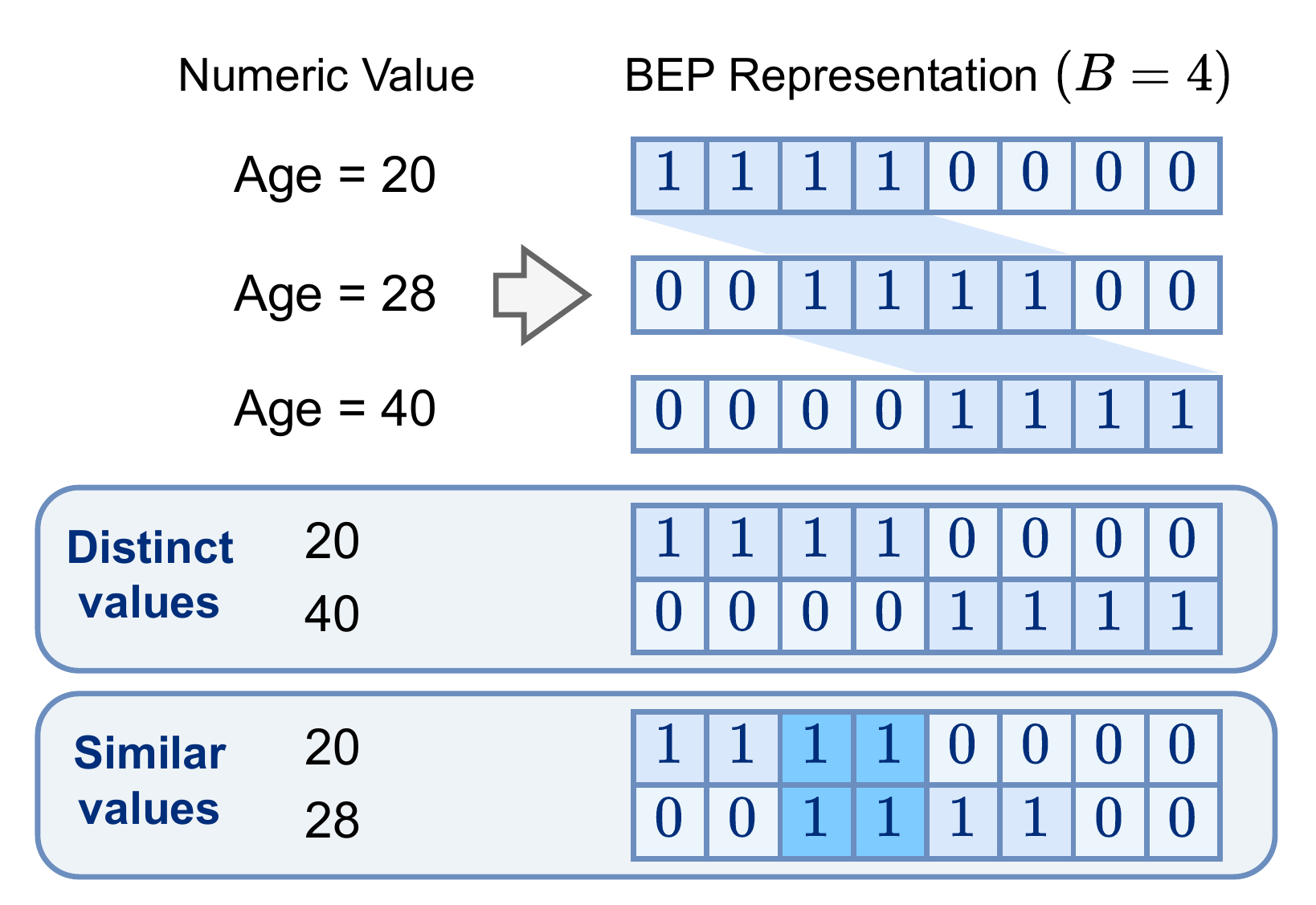}
  \vspace{-0.5em}
  \caption{An illustrative example of the BEP scheme.}
  \label{fig:example_BEP}
  \vspace{-1.5em}
\end{figure}

\paragraph{Remarks}
Figure~\ref{fig:example_BEP} provides an illustrative example for a numerical feature. 
The smallest value (e.g., 20) has no overlap with the largest value (e.g., 40), but shares 2 bits with a nearby value (e.g., 28).
Consequently, numerically distant values exhibit minimal overlap and large Jaccard distances, while similar values share more bits and have smaller distances.
This design allows Jaccard distance in the BEP space to faithfully reflect original numeric similarity, while maintaining a unified, sparse, and interpretable representation across all feature types.

\begin{lemma}[\textbf{Quantization Error Bounds for BEP Jaccard Distance}] 
\label{lem:bep-quant-bracket}
  Let $x,y\in[0,1]$ and define $s(x) = \mathrm{round}(Bx) \in \{0,\cdots,B\}$. 
  Let $\bm{\phi}(x)\in\{0,1\}^{2B}$ denote the BEP encoding with a contiguous block of $B$ ones starting at position $s(x)$.
  Let $t:=|x-y|$, and define $t_- := \max\{0,t-\tfrac{1}{B}\}$, $t_+ := \min\{1,t+\tfrac{1}{B}\}$. 
  Then the Jaccard distance between BEP codes satisfies:
  \begin{displaymath}
    \textstyle \frac{2t_-}{1+t_-} \leq d_J(\bm{\phi}(x),\bm{\phi}(y)) \leq \frac{2t_+}{1+t_+}.
  \end{displaymath}
\end{lemma}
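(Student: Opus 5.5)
Both codes are blocks of exactly $B$ consecutive ones, so I will first reduce the Jaccard distance to a function of the integer offset gap $\Delta := |s(x)-s(y)|$ alone. Two blocks of length $B$ starting at $s(x)$ and $s(y)$ overlap in $\max\{0,B-\Delta\}$ positions. Since $\Delta\le B$ always holds (both offsets lie in $\{0,\dots,B\}$), the intersection has size $B-\Delta$. The union then has size $2B-(B-\Delta)=B+\Delta$. This gives the exact formula
\begin{displaymath}
  d_J(\bm{\phi}(x),\bm{\phi}(y)) \;=\; 1-\frac{B-\Delta}{B+\Delta} \;=\; \frac{2\Delta}{B+\Delta} \;=\; \frac{2u}{1+u},\qquad u:=\Delta/B\in[0,1].
\end{displaymath}
It is worth noting here that the encoding fits in $2B$ bits, because $s(x)+B\le 2B$.

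The second step is to bracket $u$ in terms of $t=|x-y|$. Rounding moves each value by at most half a unit: $|s(x)-Bx|\le \tfrac12$ and $|s(y)-By|\le\tfrac12$. By the triangle inequality,
\begin{displaymath}
  |\Delta - Bt| \le 1, \qquad\text{so}\qquad t-\tfrac1B \le u \le t+\tfrac1B.
\end{displaymath}
Two additional facts hold trivially: $u\ge 0$, and $u\le 1$ because $\Delta\le B$. Combining these gives $t_-\le u\le t_+$ with exactly the clipped quantities $t_\pm$ defined in the statement.

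The final step is monotonicity. The map $g(u)=\frac{2u}{1+u}$ has derivative $g'(u)=2/(1+u)^2>0$, so it is increasing on $[0,1]$. Applying $g$ to $t_-\le u\le t_+$ yields the claimed bracket $g(t_-)\le d_J\le g(t_+)$.

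The only point needing care is the first step. I have to verify that the overlap formula $B-\Delta$ remains valid at the extreme $\Delta=B$, where the blocks are disjoint and $d_J=1$. I also have to handle the convention for $d_J$ when both codes coincide ($\Delta=0$, distance $0$). Neither is a real obstacle. The slack of $1/B$, rather than the sharper $\tfrac{1}{2B}$ per endpoint, comes directly from summing the two rounding errors, which matches the statement.
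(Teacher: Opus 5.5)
Your proof is correct and follows the same route as the paper. Both arguments get the closed form $d_J = 2u/(1+u)$ in the normalized offset gap, bound that gap within $1/B$ of $t$ using the two rounding errors and the triangle inequality, and finish by monotonicity. You are more explicit than the paper in two places: you derive the overlap $B-\Delta$ and union $B+\Delta$, and you justify the clipping $0 \le u \le 1$ that produces $t_\pm$. The paper only asserts both of these.
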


\begin{proof}
  \vspace{-1.0em}
  By nearest rounding, we have $|\tfrac{s(x)}{B} - x| \leq \tfrac{1}{2B}$ and $|\tfrac{s(y)}{B} - y| \leq \tfrac{1}{2B}$.
  Applying the triangle inequality yields: 
  \begin{displaymath}
    {\bigl|\, |\tfrac{s(x)-s(y)}{B}|-|x-y|\,\bigr|} \leq \tfrac{1}{B}. 
  \end{displaymath}
  Let $\tau:=|\tfrac{s(x)-s(y)}{B}|$. Then $|\tau-t|\le \frac{1}{B}$, implying 
  \begin{displaymath} 
    t_- \leq \tau \leq t_+.
  \end{displaymath}
  For BEP, the overlap and union depend only on the relative shift $\tau$, yielding the closed-form Jaccard distance:
  \begin{displaymath}
    d_J(\bm{\phi}(x), \bm{\phi}(y)) = \tfrac{2\tau}{1+\tau},
  \end{displaymath}
  which is monotone increasing in $\tau$. Substituting the bounds on $\tau$ completes the proof.
\end{proof}

\subsection{Module 2: Feature Weighting}
\label{sec:method:weights}

Beyond representation, effective clustering of mixed-type tabular data hinges on \emph{feature weighting}. 
Unlike embedding dimensions, tabular features correspond to heterogeneous real-world measurements whose relevance is often uneven and context-dependent. 
Uniformly weighting all features can dilute informative attributes and obscure latent cluster structure. 
Yet, clustering is inherently unsupervised, making it difficult to estimate feature importance without labels.

To address this, we reformulate unsupervised feature weighting as a set of supervised \emph{dependency estimation} tasks via a \textbf{Leave-One-Feature-Out (LOFO)} strategy. 
This design is conceptually inspired by masked reconstruction objectives in language \cite{devlin2019bert, clarkelectra} and vision \cite{pathak2016context, he2022masked, xie2022simmim}, which uncover structured dependencies by predicting missing components from observed context. 
For each feature $x_j$, we predict it from the remaining features $X_{\setminus j}$:
\begin{displaymath}
  f_j: X_{\setminus j}\rightarrow x_j, \quad j \in \{1,\cdots,d\}.
\end{displaymath}
This yields $d$ supervised tasks, each quantifying how strongly a feature is supported by the rest of the table.

\paragraph{Sensing via LOFO and TreeSHAP}
We instantiate each predictor $f_j$ using a Random Forest (RF), chosen for its robustness to heterogeneous data and ability to capture non-linear interactions \cite{breiman2001random, au2018random}. 
RF classifiers are used for categorical targets, and RF regressors for numerical targets.
Crucially, RFs admit efficient and principled feature attributions via \textbf{TreeSHAP} \cite{lundberg2020treeshap}, which provides Shapley-value-based credit assignment; more details are provided in Appendix \ref{app:impl:treeshap}.

For each target feature $x_j$, we train an RF on a subsampled training set and evaluate it on held-out data. 
For each tree $u$ in the ensemble, we calculate a global attribution vector $\bm{s}_{j,u} \in \mathbb{R}_{\geq 0}^{d-1}$ over the remaining features by aggregating absolute TreeSHAP values. These vectors encode how different subsets of features contribute to explaining $x_j$.

\paragraph{Selection via Quality-Diversity Greedy Search}
Trees in an RF are stochastic and heterogeneous: some are accurate but redundant (provide near-duplicate dependency patterns), while others are diverse but unreliable. 
Since our goal is to produce multiple \emph{distinct yet trustworthy} weighting views for each feature $x_j$, we explicitly balance quality and diversity when selecting trees.
For tree $u$ predicting $x_j$, we define $q_{j,u}\in[0,1]$ as the normalized validation performance. We use the coefficient of determination $R^2$ for regression tasks and accuracy for classification tasks. 
Diversity between trees is measured by the dissimilarity of their attribution vectors, quantified via cosine distance.
Given $T$ trees, we select a subset $\mathcal{U}_j \subseteq \{1,\cdots,T\}$ of size $m$ by maximizing:
\begin{displaymath}
  \textstyle J_j(\mathcal{S})= \frac{\lambda }{|\mathcal{S}|} \sum_{u \in \mathcal{S}} q_{j,u} + \frac{(1-\lambda) \cdot 2}{|\mathcal{S}|(|\mathcal{S}|-1)} \sum_{u<v \in \mathcal{S}} (1-\cos(u,v)),
\end{displaymath}
where $\lambda\in[0,1]$ controls the trade-off between quality and diversity, and $\mathcal{S}$ denotes a candidate subset of trees.

We optimize $J_j$ via greedy selection \cite{gollapudi2009result-diversification, borodin2017maxsum, nemhauser1978submodular}, iteratively adding the tree with the largest marginal gain. 
This yields a compact, diverse set $\mathcal{U}_j$, in which each tree corresponds to a distinct dependency view of feature $x_j$.
Implementation details are provided in Appendix \ref{app:impl:qd-greedy}.

\paragraph{Completion into Full Feature-Weight Vectors}
Each selected tree $u \in \mathcal{U}_j$ is then converted into a probability-like feature-weight vector $\bm{w}_{j,u} \in \Delta^{d}$, where
$\Delta^{d}=\{\bm{w}\in \mathbb{R}^{d}:\ \bm{w}\ge 0,\ \|\bm{w}\|_{1}=1\}$; this enforces non-negativity and unit $\ell_{1}$-norm.
We normalize its attribution vector over the observed features, scale it by the validation quality $q_{j,u}$, and assign the remaining mass to the held-out feature $x_j$:
\begin{displaymath}
  \bm{w}_{j,u}(k) = 
  \begin{cases}
    q_{j,u} \cdot \tfrac{\bm{s}_{j,u}}{\sum_{k \neq j} \bm{s}_{j,u}(k)}, & k \neq j,\\
    1 - q_{j,u}, & k = j.
  \end{cases}
\end{displaymath}
This construction is deliberately conservative: when $x_j$ is highly predictable, weight mass shifts toward its supporting features; when $x_j$ is weakly predictable, mass remains on $x_j$ itself. 
The result is a set of diverse, interpretable feature-weight vectors that faithfully reflect feature dependencies in the data and directly guide the subsequent clustering stage.

\subsection{Module 3: Weight-Informed Clustering}
\label{sec:method:clustering}

Mixed-type tabular data often admits multiple plausible clusterings, as different subsets of features can dominate similarity under different semantics. 
\textbf{WISE} therefore treats each feature-weight vector from Module~2 as a distinct view of the BEP-encoded data and aggregates these views through a two-stage weight-informed clustering procedure.

\paragraph{Stage I: Repeated Weighted $k$-FreqItems}
Let $\bm{X}^{\text{BEP}}\in\{0,1\}^{n\times p}$ be the BEP encoding.
Module~2 produces $R$ feature-weight vectors $\{\bm{w}^{(r)}\}_{r=1}^{R}$, where each $\bm{w}^{(r)} = [w^{(r)}_1, \cdots, w^{(r)}_d] \in \Delta^{d}$ assigns non-negative weights to the $d$ original features. 
Since each feature corresponds to a fixed group of BEP bits, we lift feature weights to the bit level by assigning all bits derived from feature $j$ the same weight $w^{(r)}_j$.
Using these bit weights, we extend the standard Jaccard distance to a weighted form and employ a weighted variant of $k$-FreqItems \cite{huang2023kfreqitems} with a fixed base cluster count $k_0$ (see Appendix~\ref{app:impl:w_kfreq} for details).
This procedure yields per-round cluster labels $\ell_r(i) \in \{0,\cdots,k_0-1\}$ and sparse binary cluster centers (FreqItems) $\{\bm{c}^{(r)}_j\}_{j=1}^{k_0}$.

\paragraph{Stage II: Record-Space Refinement}
The $R$ base partitions provide complementary but noisy views. 
We summarize them in a record matrix
$\bm{L} \in \{0,\cdots,k_0-1\}^{n\times R}$ with $\bm{L}[i,r]=\ell_r(i)$.
We then one-hot encode $\bm{L}$ to obtain a binary embedding, where each round $r$ contributes a length-$k_0$ indicator block. A final $k$-FreqItems run on this embedding with $K$ clusters produces the final partition.

\subsection{Module 4: Intrinsic Interpretability}
\label{sec:method:interpretability}

Most clustering methods are not inherently interpretable, as assignments are driven by opaque objectives or latent embeddings with no direct linkage to original features.
In contrast, \textbf{WISE} is end-to-end transparent: BEP preserves feature semantics, LOFO-derived weights explicitly shape similarity, and the two-stage clustering exposes all intermediate labels.
This module quantifies the contribution of each weighted view and each instance to the final clusters, while guaranteeing consistency between instance- and cluster-level explanations.

Let $R$ be the number of weighted views, $k_0$ and $K$ be the cluster count in Stage~I and Stage~II respectively.
Stage~I yields per-round labels $\ell_r(i)$ for each instance $i \in \{1,\cdots,n\}$. Each round $r\in\{1,\cdots,R\}$ is associated with a feature-weight vector $\bm{w}^{(r)} \in \mathbb{R}_{\ge 0}^{d}$ with $\|\bm{w}^{(r)}\|_1=1$. Stage~II yields final clusters $\{\mathcal{C}_1, \cdots, \mathcal{C}_K\}$.
\begin{definition}[\textbf{Cluster-Bit Frequency}]
\label{def:clus-bit-freq}
  For a final cluster $\mathcal{C}_j$, round $r$, and label-bit $t$, define cluster-bit frequency: 
  \begin{displaymath}
    F_{j}(r,t) = \textstyle \frac{1}{|\mathcal{C}_j|}\sum_{i\in \mathcal{C}_j} \mathbb{I}\{\ell_r(i)=t\}.
  \end{displaymath}
\end{definition}

\begin{definition}[\textbf{Discriminative FreqItems (DFI)}]
\label{def:dfi}
  DFI measures how strongly a cluster $\mathcal{C}_j$ is over-represented at a given label-bit relative to competing clusters:
  \begin{displaymath}
    \mathrm{DFI}_{j}(r,t) = \textstyle \max\bigl\{0,\ F_{j}(r,t)-\max_{k\neq j}F_{k}(r,t)\bigr\}.
  \end{displaymath}
  The round credit for $\mathcal{C}_j$ is $c_{j,r} = \textstyle \sum_{t=0}^{k_0-1}\mathrm{DFI}_{j}(r,t)$.
\end{definition}

\begin{definition}[\textbf{Cluster-Level Feature Weights}]
\label{def:clus-feature-weights}
  Let $L_1(\bm{v})=\bm{v}/\|\bm{v}\|_1$ when $\|\bm{v}\|_1>0$.
  Cluster-level feature weight is obtained by aggregating round weights weighted by their discriminative credits:
  \begin{displaymath}
    \textstyle \bm{W}_{\mathrm{cluster}}(\mathcal{C}_j) = L_1(\sum_{r=1}^{R} c_{j,r}\,\bm{w}^{(r)}) \in \mathbb{R}_{\ge 0}^{d}.
  \end{displaymath}
\end{definition}

\begin{definition}[\textbf{Instance-Level DFI and Feature Weights}]
\label{def:instweights}
  Let $\varepsilon>0$ be a small constant.
  For an instance $i \in \mathcal{C}_j$, define its instance round contribution as $c_i[r] = \textstyle \frac{\mathrm{DFI}_{j}(r, \ell_r(i))}{\max\{\varepsilon, F_{j}(r,\ell_r(i))\}}$,
  and its instance-level feature weights as:
  \begin{displaymath}
    \bm{w}^{\mathrm{raw}}_i = \textstyle \sum_{r=1}^{R} c_i[r]\ \bm{w}^{(r)},
    \quad
    \bm{W}_{\mathrm{instance}}(i) = L_1(\bm{w}^{\mathrm{raw}}_i).
  \end{displaymath}
\end{definition}

\begin{lemma}[\textbf{Instance-to-Cluster Consistency}]
\label{lem:consistency}
  For any $\mathcal{C}_j$, averaging raw instance feature vectors over cluster members recovers the (pre-normalized) cluster vector:
  \begin{displaymath}
    \textstyle \frac{1}{|\mathcal{C}_j|}\sum_{i \in \mathcal{C}_j}\bm{w}^{\mathrm{raw}}_i = \sum_{r=1}^{R} c_{j,r}\,\bm{w}^{(r)},
  \end{displaymath}
  and L1-normalization recovers $\bm{W}_{\mathrm{cluster}}(\mathcal{C}_j)$:
  \begin{displaymath}
    \textstyle L_1(\frac{1}{|\mathcal{C}_j|} \sum_{i \in \mathcal{C}_j} \bm{w}^{\mathrm{raw}}_i) = \bm{W}_{\mathrm{cluster}}(\mathcal{C}_j).
  \end{displaymath}
\end{lemma}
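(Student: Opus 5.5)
The plan is to reduce the vector identity to a scalar identity for each round, and then regroup the sum over cluster members by Stage~I label. By linearity of the definition of $\bm{w}^{\mathrm{raw}}_i$ in Definition~\ref{def:instweights},
\begin{displaymath}
  \textstyle \frac{1}{|\mathcal{C}_j|}\sum_{i\in\mathcal{C}_j}\bm{w}^{\mathrm{raw}}_i=\sum_{r=1}^{R}\Bigl(\frac{1}{|\mathcal{C}_j|}\sum_{i\in\mathcal{C}_j}c_i[r]\Bigr)\bm{w}^{(r)}.
\end{displaymath}
So it suffices to show, for every round $r$, that $\frac{1}{|\mathcal{C}_j|}\sum_{i\in\mathcal{C}_j}c_i[r]=c_{j,r}$. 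This holds for arbitrary vectors $\bm{w}^{(r)}$, so nothing about the simplex structure is needed at this stage.

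Fix $r$ and partition $\mathcal{C}_j$ by the value $t=\ell_r(i)\in\{0,\dots,k_0-1\}$. By Definition~\ref{def:clus-bit-freq}, the number of members with $\ell_r(i)=t$ is exactly $|\mathcal{C}_j|\,F_j(r,t)$. Every such member has the same contribution $c_i[r]=\mathrm{DFI}_j(r,t)/\max\{\varepsilon,F_j(r,t)\}$. Hence
\begin{displaymath}
  \textstyle \sum_{i\in\mathcal{C}_j}c_i[r]=\sum_{t=0}^{k_0-1}|\mathcal{C}_j|\,F_j(r,t)\,\frac{\mathrm{DFI}_j(r,t)}{\max\{\varepsilon,F_j(r,t)\}}.
\end{displaymath}
Labels with $F_j(r,t)=0$ contribute zero on the right-hand side. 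Moreover $\mathrm{DFI}_j(r,t)\le F_j(r,t)$, so they also contribute zero to $c_{j,r}$ in Definition~\ref{def:dfi}, and such $t$ can be discarded on both sides.

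For the remaining labels, $F_j(r,t)\ge 1/|\mathcal{C}_j|$. Provided $\varepsilon\le 1/|\mathcal{C}_j|$, the maximum in the denominator equals $F_j(r,t)$ and the factors cancel. The sum then becomes $|\mathcal{C}_j|\sum_t\mathrm{DFI}_j(r,t)=|\mathcal{C}_j|\,c_{j,r}$, which is the required per-round identity.

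The only real obstacle is this $\varepsilon$-guard. The identity is exact precisely when $\varepsilon$ does not exceed the smallest nonzero cluster-bit frequency, and that is guaranteed by $\varepsilon\le 1/|\mathcal{C}_j|$. I will state this explicitly as the meaning of ``small constant'' in Definition~\ref{def:instweights}, and remark that $\varepsilon$ serves only to make $c_i[r]$ well defined and is never active on labels that actually occur.

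The second claim then follows immediately. The two vectors $\frac{1}{|\mathcal{C}_j|}\sum_i\bm{w}^{\mathrm{raw}}_i$ and $\sum_r c_{j,r}\bm{w}^{(r)}$ are equal, so applying $L_1$ to both yields $\bm{W}_{\mathrm{cluster}}(\mathcal{C}_j)$ by Definition~\ref{def:clus-feature-weights}. This holds whenever the common $\ell_1$-norm is positive, which is the same condition under which $\bm{W}_{\mathrm{cluster}}$ is defined.
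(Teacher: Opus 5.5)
Your proposal is correct and follows essentially the same argument as the paper: expand $\bm{w}^{\mathrm{raw}}_i$ linearly in $r$, group the members of $\mathcal{C}_j$ by their round-$r$ label so that each count becomes $|\mathcal{C}_j|F_j(r,t)$, drop labels with $F_j(r,t)=0$ (where $\mathrm{DFI}_j(r,t)=0$), cancel the denominator on the remaining labels, and then $L_1$-normalize both sides. Your explicit condition $\varepsilon\le 1/|\mathcal{C}_j|$ and your requirement that the common $\ell_1$-norm be positive make precise what the paper leaves informal as ``$\varepsilon \ll 1$''.
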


\begin{proof}
  Fix a cluster $\mathcal{C}_j$ and a round $r$. Group instances in $\mathcal{C}_j$ by their round-$r$ label $t$:
  \begin{displaymath}
  \begin{aligned}
    \textstyle \frac{1}{|\mathcal{C}_j|}\sum_{i \in \mathcal{C}_j} c_i[r] = 
    & \textstyle \sum_{t=0}^{k_0-1} \frac{\mathrm{DFI}_{j}(r,t)}{\max\{\varepsilon, F_{j}(r,t)\}} \\
    & \textstyle \cdot \frac{1}{|\mathcal{C}_j|} \sum_{i \in \mathcal{C}_j}\mathbb{I}\{\ell_r(i)=t\}.
  \end{aligned}
  \end{displaymath}
  By Definition~\ref{def:clus-bit-freq}, the last factor is $F_{j}(r,t)$, hence
  \begin{displaymath}
    \textstyle \frac{1}{|\mathcal{C}_j|}\sum_{i \in \mathcal{C}_j} c_i[r] = 
    \sum_{t=0}^{k_0-1}\frac{\mathrm{DFI}_{j}(r,t)}{\max\{\varepsilon, F_{j}(r,t)\}}
    \cdot F_{j}(r,t).
  \end{displaymath}
  If $F_{j}(r,t)=0$, then $\mathrm{DFI}_{j}(r,t)=0$, and the term vanishes.
  For $F_{j}(r,t)>0$ and $\varepsilon \ll 1$, the denominator equals $F_{j}(r,t)$, giving
  $\textstyle \frac{1}{|\mathcal{C}_j|}\sum_{i \in \mathcal{C}_j} c_i[r] = \sum_{t=0}^{k_0-1}\mathrm{DFI}_{j}(r,t) = c_{j,r}$.
  Multiplying by $\bm{w}^{(r)}$ and summing over $r$ yields:
  \begin{displaymath}
  \begin{aligned}
    \textstyle \frac{1}{|\mathcal{C}_j|}\sum_{i \in \mathcal{C}_j}\bm{w}^{\mathrm{raw}}_i 
    &= \textstyle \sum_{r=1}^{R}\Bigl(\frac{1}{|\mathcal{C}_j|}\sum_{i \in \mathcal{C}_j} c_i[r]\Bigr)\bm{w}^{(r)} \\
    &= \textstyle \sum_{r=1}^{R} c_{j,r}\,\bm{w}^{(r)}.
  \end{aligned}
  \end{displaymath}
  L1-normalization on both sides completes the proof.
\end{proof}

\section{Experiments}
\label{sec:expt}

\subsection{Experimental Setup}
\label{sec:expt:setup}

\paragraph{Datasets}
We evaluate WISE on six real-world mixed-type tabular datasets: \textbf{Adult} \cite{uci_adult_1996}, \textbf{Vermont} \cite{urban_dp_vermont_2020}, \textbf{Arizona} \cite{urban_dp_arizona_2020}, \textbf{Obesity} \cite{palechor2019obesity}, \textbf{Credit} \cite{uci_credit_approval_1987}, and \textbf{GeoNames} \cite{geonames}. 
These datasets span diverse scales, feature dimensionality, and numerical-categorical composition, forming a comprehensive testbed for mixed-type clustering. 
Detailed statistics and preprocessing are provided in Appendix~\ref{app:impl:datasets}.

\paragraph{Evaluation Metrics}
We evaluate clustering using six complementary metrics capturing quality and efficiency.
When ground-truth is available, we report four external metrics: Adjusted Rand Index (\textbf{ARI}) \cite{hubert1985ARI}, Normalized Mutual Information (\textbf{NMI}) \cite{vinh2009NMI}, \textbf{Purity} \cite{manning2008ir}, and Clustering Accuracy (\textbf{ACC}), to quantify agreement with reference labels. 
We additionally report the Silhouette Coefficient (\textbf{SWC}) \cite{rousseeuw1987silhouette} for intrinsic structure and end-to-end wall-clock time for efficiency. 
Metric definitions are given in Appendix~\ref{app:impl:metrics}.

\paragraph{Baselines}
We compare \textbf{WISE} with five representative baselines covering classical, neural, and representation-learning approaches for mixed-type tabular clustering.
These include the classical $k$-Prototypes (\textbf{k-Proto}) \cite{huang1997kprototypes}; neural clustering methods \textbf{IDC} \cite{svirsky2024IDC}, \textbf{TableDC} \cite{rauf2025TableDC}, and \textbf{TELL} \cite{peng2022tell}; and a transformer-based representation baseline, \textbf{SAINT} \cite{somepalli2021saint} followed by $k$-Means, to assess whether representation learning alone suffices.
Hyperparameter settings and experiment environment are provided in Appendices \ref{app:impl:params} and \ref{app:impl:environment}, respectively.
\vspace{-0.5em}

\begin{table}[t]
\centering
\small
\renewcommand{\arraystretch}{1.0}
\caption{Clustering quality of all methods. \textbf{Bold} and \underline{underlined} denote the best and second-best results, respectively.}
\label{tab:quality}
\resizebox{0.99\columnwidth}{!}{
\begin{tabular}{clccccc}
\toprule
\rowcolor[HTML]{D9EAD3}
\textbf{Dataset} & \textbf{Method} & \textbf{ARI$\uparrow$} & \textbf{NMI$\uparrow$} & \textbf{Purity$\uparrow$} & \textbf{ACC$\uparrow$} & \textbf{SWC$\uparrow$} \\
\midrule
\multirow{6}{*}{\rotatebox[origin=c]{90}{\textbf{Adult}}}
 & \textbf{k-Proto} & 0.259 & 0.266 & 0.616 & \underline{0.475} & \underline{0.146} \\
 & \textbf{IDC}       & 0.010 & 0.052 & 0.446 & 0.380 & -0.042 \\
 & \textbf{TableDC}   & \underline{0.359} & \underline{0.412} & \textbf{0.682} & 0.454 & 0.116 \\
 & \textbf{TELL}      & 0.049 & 0.109 & 0.479 & 0.248 & 0.094 \\
 & \textbf{SAINT}     & 0.023 & 0.014 & 0.422 & 0.313 & -0.002 \\
 & \cellcolor[HTML]{FFF2CC}\textbf{WISE}      & \cellcolor[HTML]{FFF2CC}\textbf{0.663} & \cellcolor[HTML]{FFF2CC}\textbf{0.515} & \cellcolor[HTML]{FFF2CC}\underline{0.671} & \cellcolor[HTML]{FFF2CC}\textbf{0.633} & \cellcolor[HTML]{FFF2CC}\textbf{0.165} \\
\midrule

\multirow{6}{*}{\rotatebox[origin=c]{90}{\textbf{Vermont}}}
 & \textbf{k-Proto}  & 0.225 & 0.177 & 0.520 & 0.488 & \underline{0.151} \\
 & \textbf{IDC}       & 0.050 & 0.099 & 0.505 & 0.328 & 0.013 \\
 & \textbf{TableDC}   & 0.249 & \textbf{0.247} & 0.534 & \textbf{0.534} & \textbf{0.164} \\
 & \textbf{TELL}      & 0.213 & \underline{0.232} & \textbf{0.556} & 0.470 & 0.116 \\
 & \textbf{SAINT}     & \underline{0.252} & 0.210 & \underline{0.536} & 0.521 & 0.089 \\
 & \cellcolor[HTML]{FFF2CC}\textbf{WISE}      & \cellcolor[HTML]{FFF2CC}\textbf{0.283} & \cellcolor[HTML]{FFF2CC}0.223 & \cellcolor[HTML]{FFF2CC}0.535 & \cellcolor[HTML]{FFF2CC}\underline{0.534} & \cellcolor[HTML]{FFF2CC}0.131 \\
\midrule

\multirow{6}{*}{\rotatebox[origin=c]{90}{\textbf{Arizona}}}
 & \textbf{k-Proto}  & \underline{0.168} & \underline{0.149} & \textbf{0.607} & \underline{0.430} & 0.130 \\
 & \textbf{IDC}       & 0.004 & 0.016 & \underline{0.590} & 0.294 & 0.024 \\
 & \textbf{TableDC}   & 0.024 & 0.006 & \underline{0.590} & 0.390 & \underline{0.140} \\
 & \textbf{TELL}      & 0.013 & 0.009 & \underline{0.590} & 0.312 & 0.103 \\
 & \textbf{SAINT}     & 0.058 & 0.060 & \underline{0.590} & 0.398 & 0.121 \\
 & \cellcolor[HTML]{FFF2CC}\textbf{WISE}      & \cellcolor[HTML]{FFF2CC}\textbf{0.309} & \cellcolor[HTML]{FFF2CC}\textbf{0.321} & \cellcolor[HTML]{FFF2CC}\textbf{0.607} & \cellcolor[HTML]{FFF2CC}\textbf{0.607} & \cellcolor[HTML]{FFF2CC}\textbf{0.185} \\
\midrule

\multirow{6}{*}{\rotatebox[origin=c]{90}{\textbf{Obesity}}}
 & \textbf{k-Proto}  & \underline{0.198} & 0.228 & 0.389 & 0.370 & 0.124 \\
 & \textbf{IDC}       & 0.000 & 0.000 & 0.166 & 0.166 & 0.000 \\
 & \textbf{TableDC}   & 0.179 & \underline{0.240} & \underline{0.406} & \underline{0.383} & \textbf{0.228} \\
 & \textbf{TELL}      & 0.095 & 0.207 & 0.309 & 0.282 & 0.181 \\
 & \textbf{SAINT}     & 0.136 & 0.222 & 0.388 & 0.328 & -0.031 \\
 & \cellcolor[HTML]{FFF2CC}\textbf{WISE}      & \cellcolor[HTML]{FFF2CC}\textbf{0.222} & \cellcolor[HTML]{FFF2CC}\textbf{0.263} & \cellcolor[HTML]{FFF2CC}\textbf{0.441} & \cellcolor[HTML]{FFF2CC}\textbf{0.392} & \cellcolor[HTML]{FFF2CC}\underline{0.223} \\
\midrule

\multirow{6}{*}{\rotatebox[origin=c]{90}{\textbf{Credit}}}
 & \textbf{k-Proto}  & \underline{0.325} & \underline{0.245} & \underline{0.786} & \underline{0.786} & 0.184 \\
 & \textbf{IDC}       & 0.200 & 0.233 & 0.725 & 0.725 & 0.121 \\
 & \textbf{TableDC}   & 0.005 & 0.026 & 0.555 & 0.548 & \textbf{0.319} \\
 & \textbf{TELL}      & 0.233 & 0.188 & 0.742 & 0.742 & 0.215 \\
 & \textbf{SAINT}     & 0.051 & 0.076 & 0.622 & 0.622 & 0.113 \\
 & \cellcolor[HTML]{FFF2CC}\textbf{WISE}      & \cellcolor[HTML]{FFF2CC}\textbf{0.328} & \cellcolor[HTML]{FFF2CC}\textbf{0.285} & \cellcolor[HTML]{FFF2CC}\textbf{0.787} & \cellcolor[HTML]{FFF2CC}\textbf{0.787} & \cellcolor[HTML]{FFF2CC}\underline{0.234} \\
\midrule

\multirow{6}{*}{\rotatebox[origin=c]{90}{\textbf{GeoNames}}}
 & \textbf{k-Proto}  & 0.082 & 0.133 & 0.447 & 0.287 & 0.132 \\
& \textbf{IDC}       & 0.004 & 0.064 & 0.297 & 0.255 & 0.096 \\
 & \textbf{TableDC}   & \underline{0.112} & \underline{0.158} & \underline{0.462} & \underline{0.323} & \underline{0.302} \\
 & \textbf{TELL}      & 0.051 & 0.088 & 0.398 & 0.237 & 0.038 \\
 & \textbf{SAINT}     & 0.024 & 0.054 & 0.338 & 0.212 & 0.021 \\
 & \cellcolor[HTML]{FFF2CC}\textbf{WISE}      & \cellcolor[HTML]{FFF2CC}\textbf{0.146} & \cellcolor[HTML]{FFF2CC}\textbf{0.182} & \cellcolor[HTML]{FFF2CC}\textbf{0.491} & \cellcolor[HTML]{FFF2CC}\textbf{0.337} & \cellcolor[HTML]{FFF2CC}\textbf{0.368} \\
\bottomrule
\end{tabular}
}
\end{table}
\setlength{\textfloatsep}{1.5em}

\begin{table}[t]
\centering
\small
\renewcommand{\arraystretch}{1.0}
\caption{Average rank ($\downarrow$) across datasets. For each dataset and metric, methods are ranked from 1 (best) to 6 (worst); ties are resolved by assigning the average of the tied ranks.} 
\label{tab:avg_rank}
\begin{tabular}{lccccc}
  \toprule
  \rowcolor[HTML]{D9EAD3}
  \textbf{Method} & \textbf{ARI} & \textbf{NMI} & \textbf{Purity} & \textbf{ACC} & \textbf{SWC} \\
  \midrule
  \textbf{k-Proto} & \underline{2.67} & \underline{3.00} & \underline{2.92} & \underline{2.67} & \underline{3.00} \\
  \textbf{IDC}     & 5.67 & 4.83 & 5.25 & 5.00 & 5.33 \\
  \textbf{TableDC} & 3.33 & 3.17 & 3.25 & 3.08 & \textbf{1.67} \\
  \textbf{TELL}    & 4.33 & 4.00 & 3.58 & 4.83 & 4.00 \\
  \textbf{SAINT}   & 4.00 & 4.67 & 4.42 & 4.33 & 5.33 \\
  \rowcolor[HTML]{FFF2CC} 
  \textbf{WISE}    & \textbf{1.00} & \textbf{1.33} & \textbf{1.58} & \textbf{1.08} & \textbf{1.67} \\
  \bottomrule
\end{tabular}
\end{table}


\subsection{Clustering Quality}
\label{sec:expt:quality}

\paragraph{Overall Results} 
Table~\ref{tab:quality} summarizes results across five metrics and six datasets.
\textbf{WISE} achieves the most consistent gains on the external metrics (ARI, NMI, Purity, and ACC), indicating robust alignment with ground-truth semantics across diverse datasets.
Its advantages are most pronounced in challenging regimes with high dimensionality, complex categorical structure, or correlated features, where single global trade-offs or uniform weighting fail to capture fine-grained semantics.
By contrast, WISE's LOFO-based weighting emphasizes jointly informative subsets, yielding gains on ARI, NMI, and ACC rather than Purity alone.

We also observe a recurring gap between intrinsic compactness and semantic alignment.
Deep clustering baselines (e.g., \textbf{TableDC} and \textbf{TELL}) often achieve high compactness (SWC) yet align poorly with ground truth, underscoring the limits of latent geometry in mixed-type data.
\textbf{WISE} instead grounds similarity in feature-level relevance, producing clusters that are both coherent and semantically meaningful.


\begin{figure*}[t]
  \centering
  \includegraphics[width=0.99\textwidth]{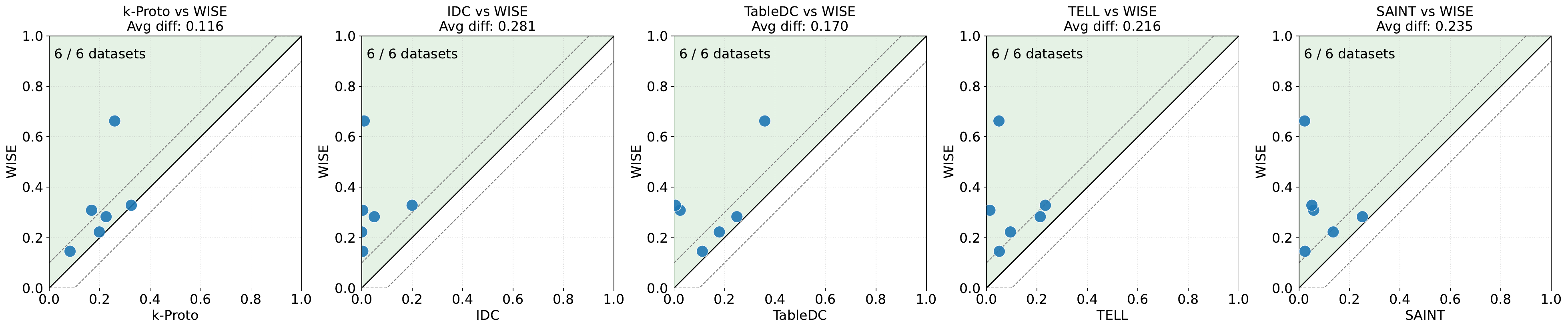}
  \vspace{-0.5em}
  \caption{Pairwise ARI comparisons between WISE and each baseline.}
  \label{fig:pairwise_ari}
  \vspace{-0.5em}
\end{figure*}

\begin{figure*}[t]
  \centering
  \includegraphics[width=0.99\textwidth]{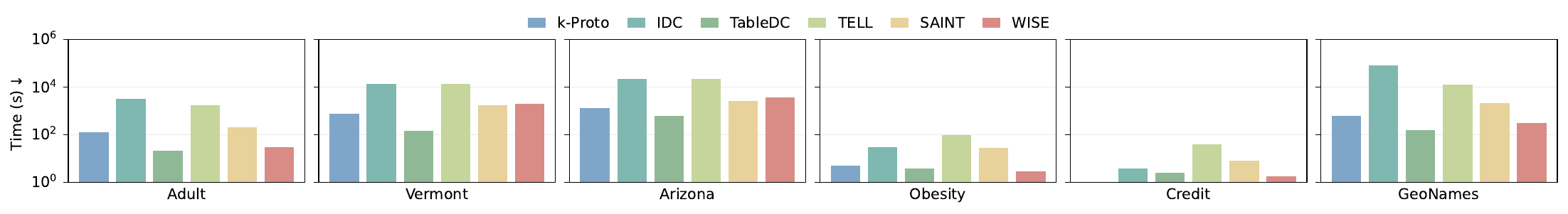}
  \vspace{-0.5em}
  \caption{Clustering efficiency of all methods on each dataset, measured on the same hardware and software stack. Reported times (seconds; log scale) include the complete clustering pipeline, covering preprocessing, representation learning (if any), and clustering.}
  \label{fig:runtime}
  \vspace{-1.0em}
\end{figure*}

\paragraph{Ranking Analysis}
Table~\ref{tab:avg_rank} reports average ranks across datasets.
\textbf{WISE} achieves the best rank on all external metrics and ties for the best on SWC, showing consistent gains rather than reliance on any single dataset or criterion.
This stability is crucial for mixed-type clustering, where datasets stress different failure modes, such as correlated attributes (Adult), high categorical complexity (Arizona), or many weak or irrelevant features (GeoNames, Credit).

The rankings also reveal complementary baseline strengths:
\textbf{k-Proto} is a reliable runner-up on agreement metrics but lacks adaptability to uneven feature relevance, while \textbf{TableDC} often excels on SWC, yet this compactness advantage does not reliably translate to better semantic agreement.
Overall, \textbf{WISE}'s top ranks support the core claim that granularity-aligned representation with weight-informed clustering yields the most reliable performance.

\paragraph{Pairwise Comparisons}
For a finer-grained view, Figure~\ref{fig:pairwise_ari} presents dataset-wise ARI scatter plots contrasting \textbf{WISE} with each baseline. 
Each point denotes a dataset; points above the diagonal indicate improvements by WISE, with the dashed line marking a +10\% margin.
\textbf{WISE} consistently outperforms every baseline on all datasets. 
The average ARI gains are substantial, ranging from 0.116 over \textbf{k-Proto} to 0.281 over \textbf{IDC}, with similarly consistent improvements over \textbf{TableDC}, \textbf{TELL}, and \textbf{SAINT}.
These uniform wins reinforce that \textbf{WISE}'s advantages are systematic rather than dataset-specific, directly reflecting the benefits of sensing diverse feature weightings and aggregating them through a principled, weight-informed clustering mechanism.

\subsection{Clustering Efficiency}
\label{sec:expt:efficiency}

Figure~\ref{fig:runtime} summarizes end-to-end wall-clock times.
\textbf{WISE} is consistently faster than deep clustering baselines, achieving 2$\sim$250$\times$ speedups over \textbf{IDC} and 6$\sim$57$\times$ over \textbf{TELL}, while remaining broadly competitive with \textbf{k-Proto}.
Compared to \textbf{SAINT}, WISE is several times faster on four datasets and slower on Vermont and Arizona, where additional cost arises from feature-wise weight sensing and multi-view clustering; these steps are embarrassingly parallel and can be substantially accelerated in practice.
\textbf{TableDC} is often the fastest overall, but WISE typically operates within the same practical regime (about 2$\sim$10$\times$ overhead) while delivering stronger clustering quality and intrinsic, feature-level explanations.
Overall, WISE's accuracy and interpretability gains come without prohibitive computational cost.
\vspace{-0.5em}

\subsection{Ablation Study}
\label{sec:expt:ablation}

\begin{figure}[t]
  \centering
  \includegraphics[width=0.99\columnwidth]{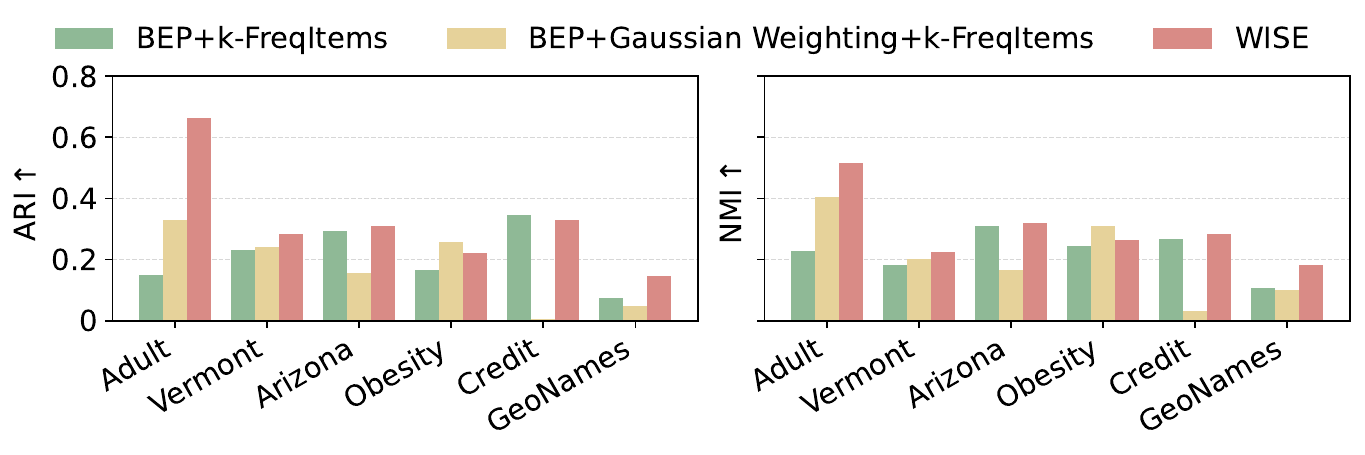}
  \vspace{-0.5em}
  \caption{Ablation study of WISE evaluating modules 2 and 3.} 
  \label{fig:ablation}
  \vspace{-1.0em}
\end{figure}

\begin{figure}[t]
  \centering
  \includegraphics[width=0.99\columnwidth]{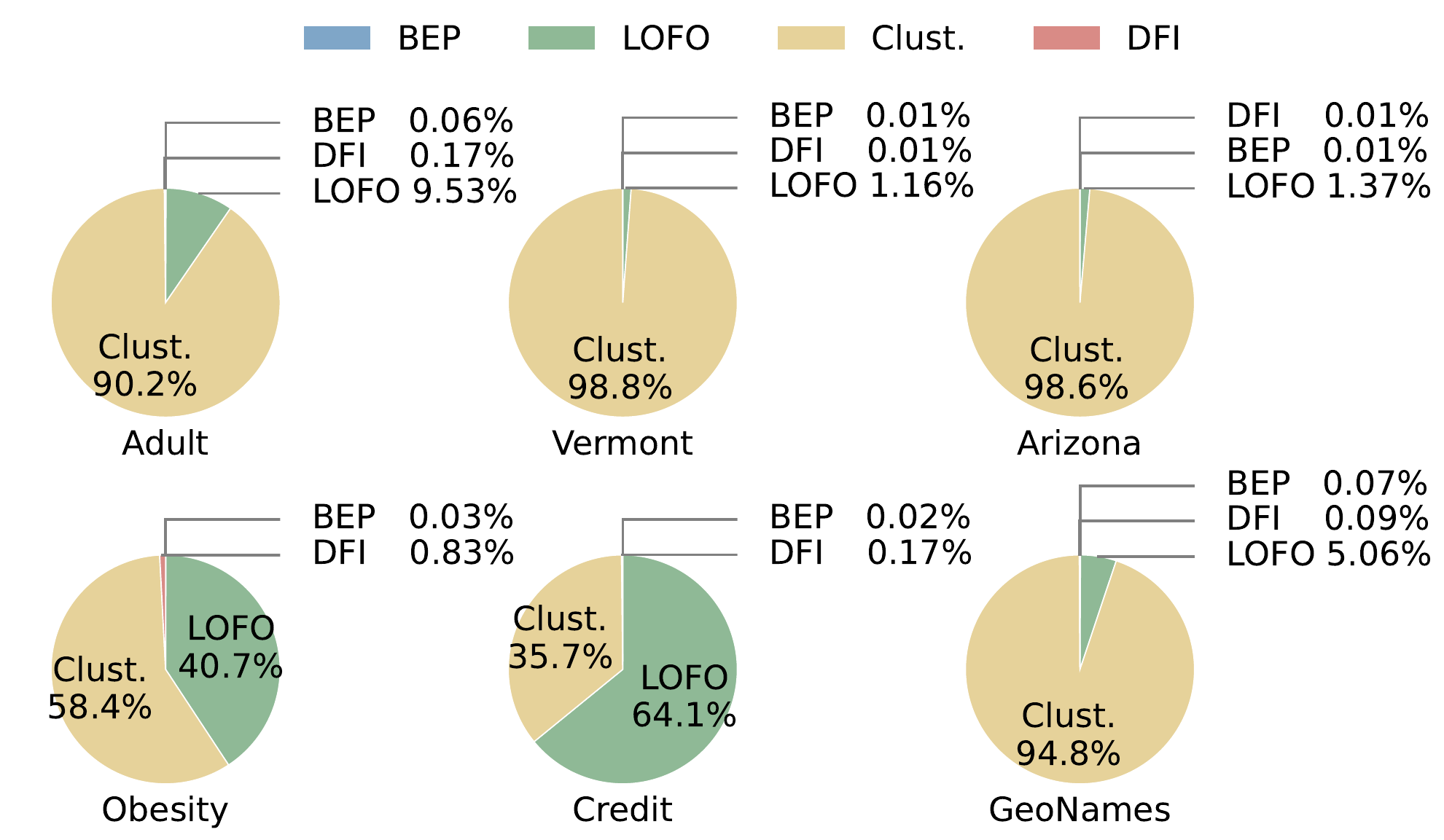}
  \vspace{-0.5em}
  \caption{Runtime breakdown of WISE across its four modules.}
  \label{fig:ablation_cost_breakdown}
  \vspace{-0.5em}
\end{figure}

We evaluate the contribution of \textbf{WISE}'s LOFO-based weight sensing and weight-informed clustering by comparing it against two baselines: \textbf{BEP+$k$-FreqItems} and \textbf{BEP+Gaussian Weighting+$k$-FreqItems}.
$k$-FreqItems is designed for sparse binary data; BEP serves as a fair adapter by mapping mixed-type tables into a unified sparse representation.
The Gaussian Weighting variant preserves WISE's two-stage pipeline but replaces LOFO with randomly sampled Gaussian weights, isolating the effect of data-driven sensing. 
All methods use the same number of clusters $K$.

As shown in Figure~\ref{fig:ablation}, \textbf{WISE} yields the most stable gains, improving NMI on all datasets and ARI on most, while remaining competitive elsewhere.
In contrast, the \textbf{Gaussian Weighting} variant shows inconsistent behavior: although random weights can help in some cases, they often amplify irrelevant dimensions and even underperform \textbf{BEP+$k$-FreqItems}. 
These results confirm that \emph{weighting alone is insufficient}; effective clustering requires weights that are \emph{aligned with true feature relevance}.

\paragraph{Runtime Breakdown}
Figure~\ref{fig:ablation_cost_breakdown} decomposes the end-to-end runtime of \textbf{WISE} across its four modules.
The cost is dominated by Module 2 (LOFO-based feature weight sensing) and Module 3 (weight-informed clustering), which constitute the core modeling components of the framework. 
In contrast, BEP encoding and DFI explanation incur negligible overhead.
This confirms that \textbf{WISE} concentrates computation on its essential weighting and aggregation mechanisms, consistent with the design goals in Section \ref{sec:method}.

\begin{figure}[t]
  \centering
  \begin{subfigure}[t]{\columnwidth}
    \centering
    \includegraphics[width=0.99\linewidth]{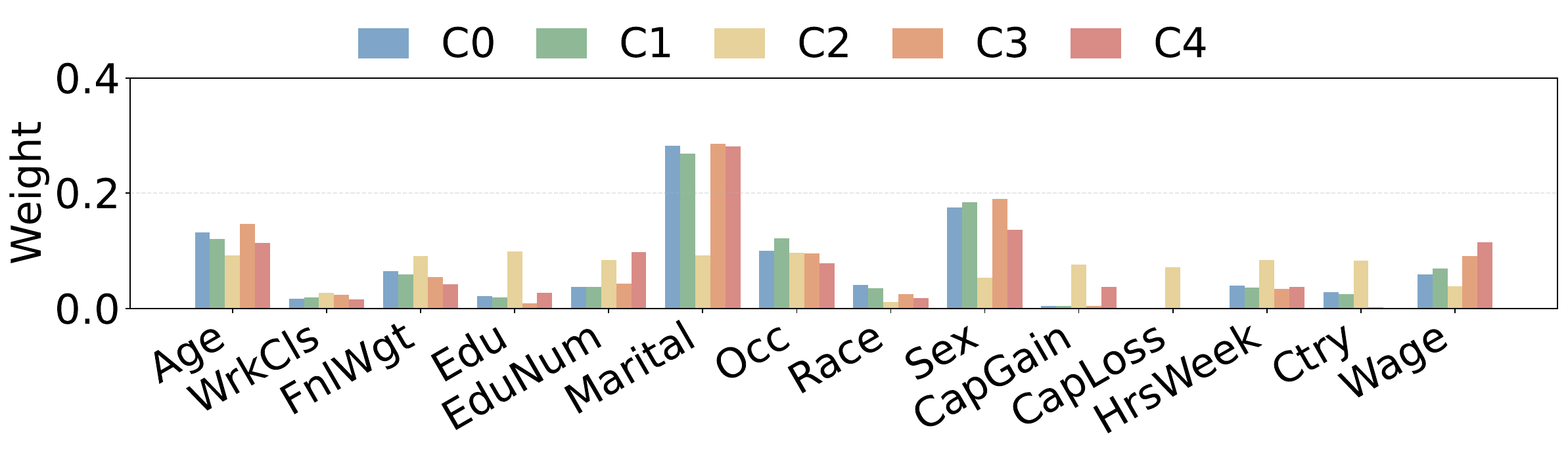}
    \vspace{-0.5em}
    \caption{Cluster-level feature weights.}
    \label{fig:adult-interp-a}
  \end{subfigure}
  \noindent
  \begin{minipage}{\columnwidth}
    \raggedright
    \begin{subfigure}[b]{0.49\columnwidth}
      \centering
      \includegraphics[width=\linewidth]{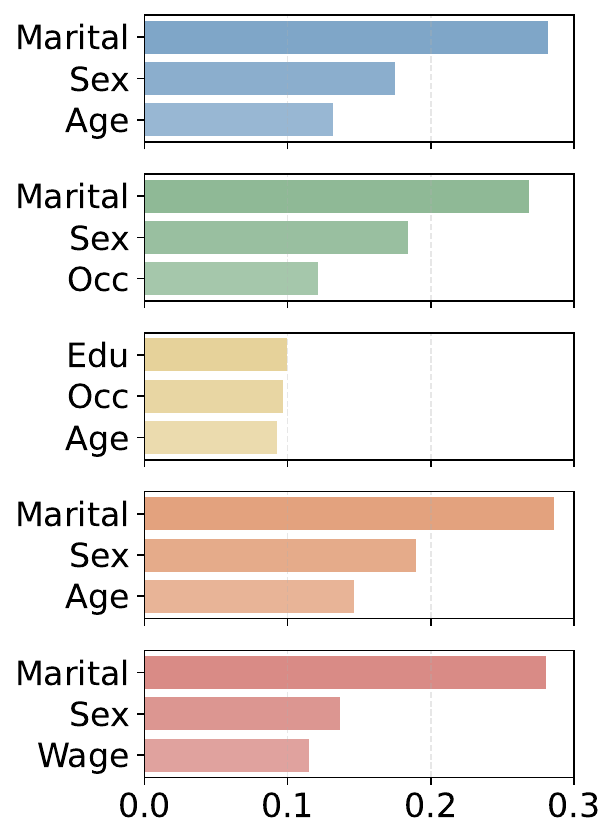}
      \caption{Top-3 features per cluster.}
      \label{fig:adult-interp-b}
    \end{subfigure}
    \begin{subfigure}[b]{0.45\columnwidth}
      \centering
      \includegraphics[width=\linewidth,trim=0 0 0 35,clip]{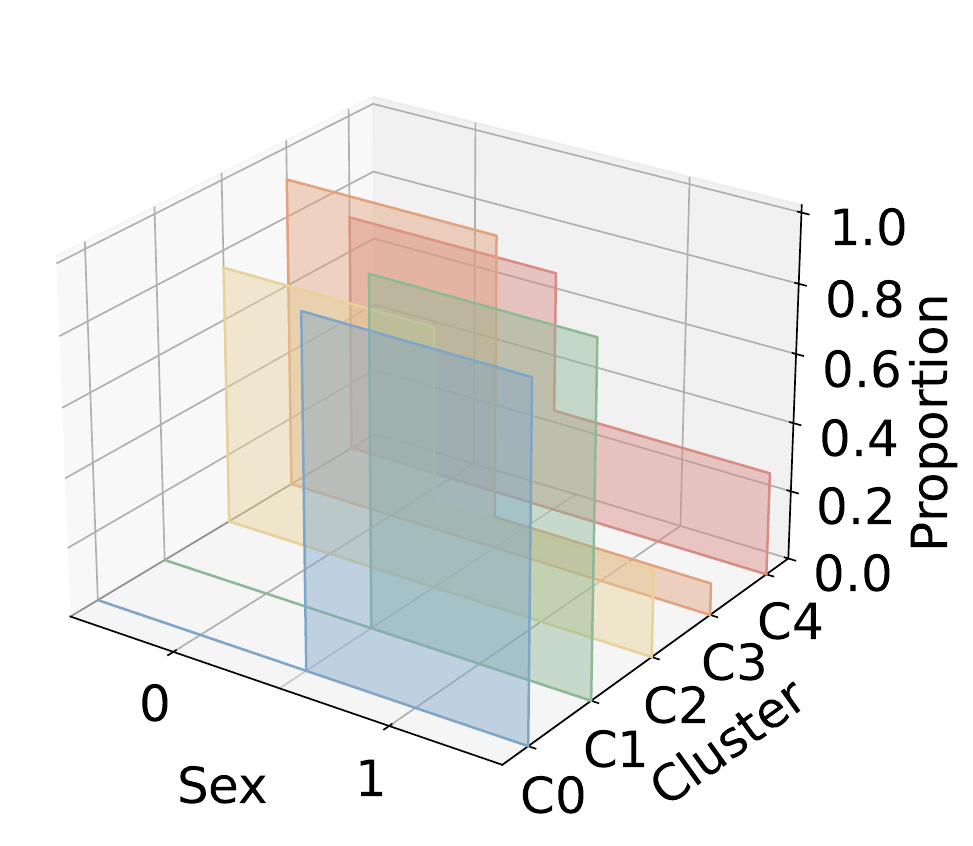}
      \includegraphics[width=\linewidth,trim=0 0 0 48,clip]{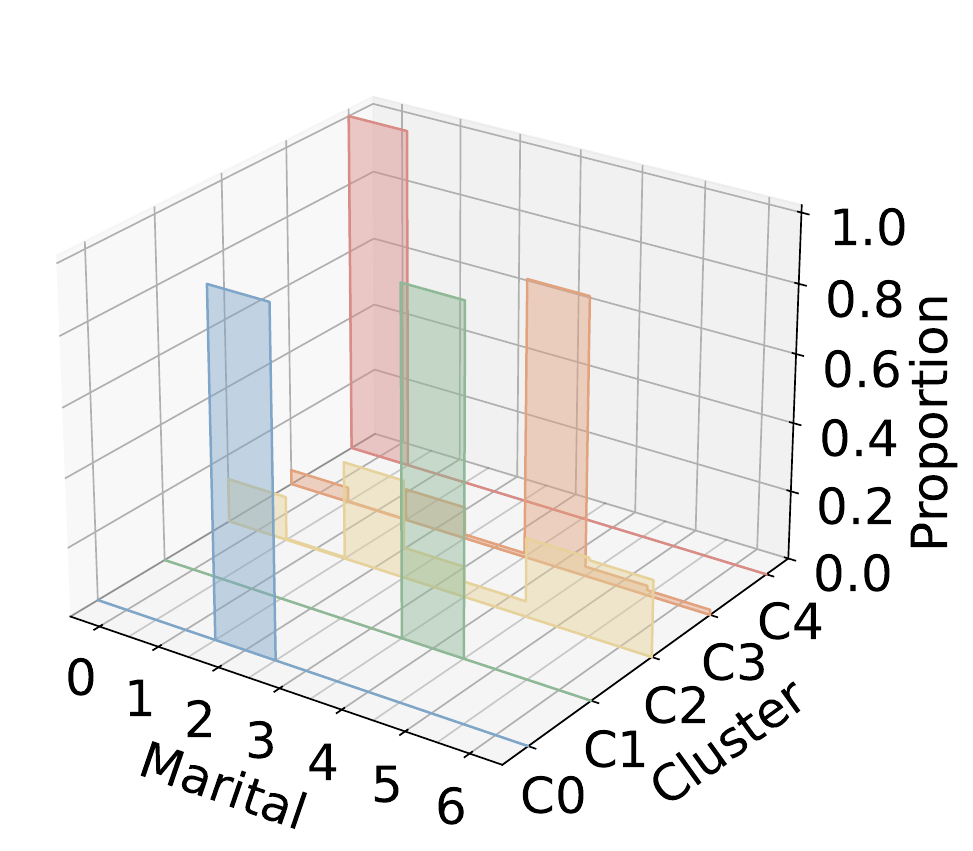}
      \caption{Cluster-wise histogram.}
      \label{fig:adult-interp-c}
    \end{subfigure}
  \end{minipage}
  \vspace{-0.5em}
  \caption{Case Study on Adult.}
  \label{fig:case_study_adult}
  \vspace{-0.5em}
\end{figure}

\begin{figure}[t]
  \centering
  \begin{subfigure}[t]{\columnwidth}
    \centering
    \includegraphics[width=0.99\linewidth]{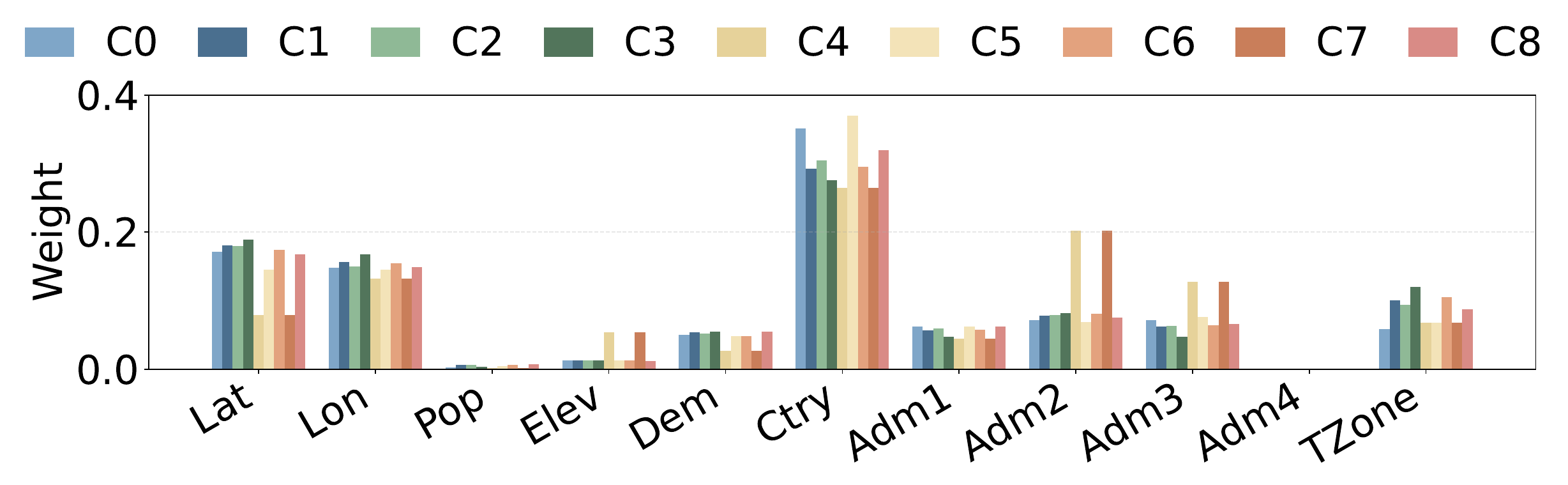}
    \vspace{-0.5em}
    \caption{Cluster-level feature weights.}
    \label{fig:geonames-interp-a}
  \end{subfigure}
  \begin{subfigure}[t]{\columnwidth}
    \centering
    \includegraphics[width=\linewidth]{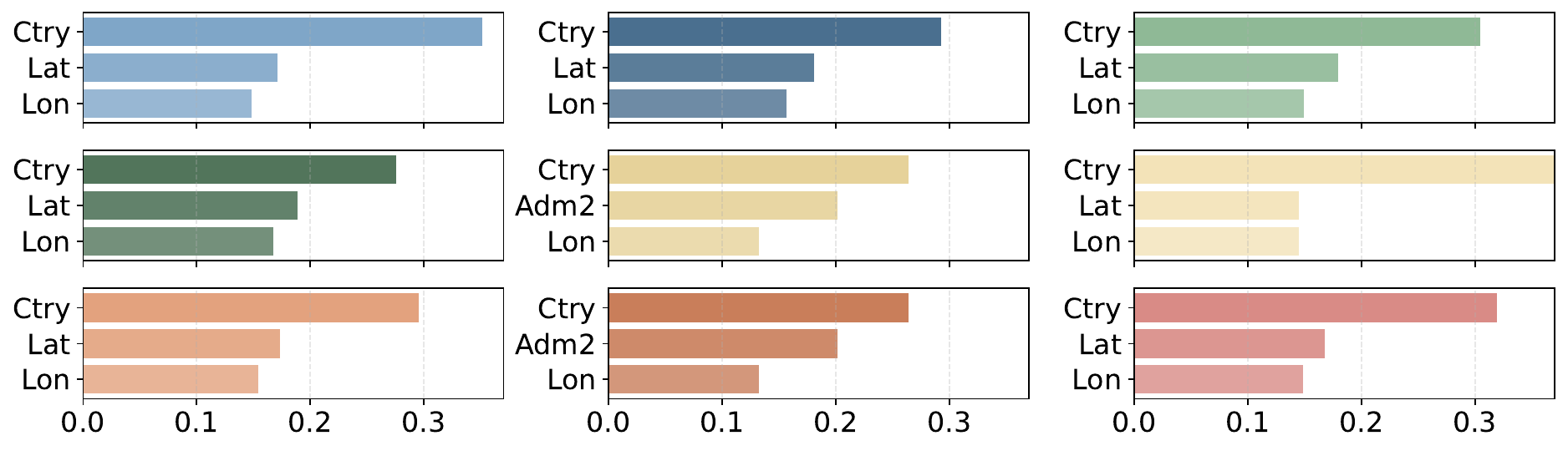}
    \caption{Top-3 feature per cluster.}
    \label{fig:geonames-interp-b}
  \end{subfigure}
  \begin{subfigure}[t]{\columnwidth}
    \centering
    \begin{minipage}[t]{0.495\columnwidth}
      \centering
      \includegraphics[width=\linewidth,trim=0 0 0 40,clip]{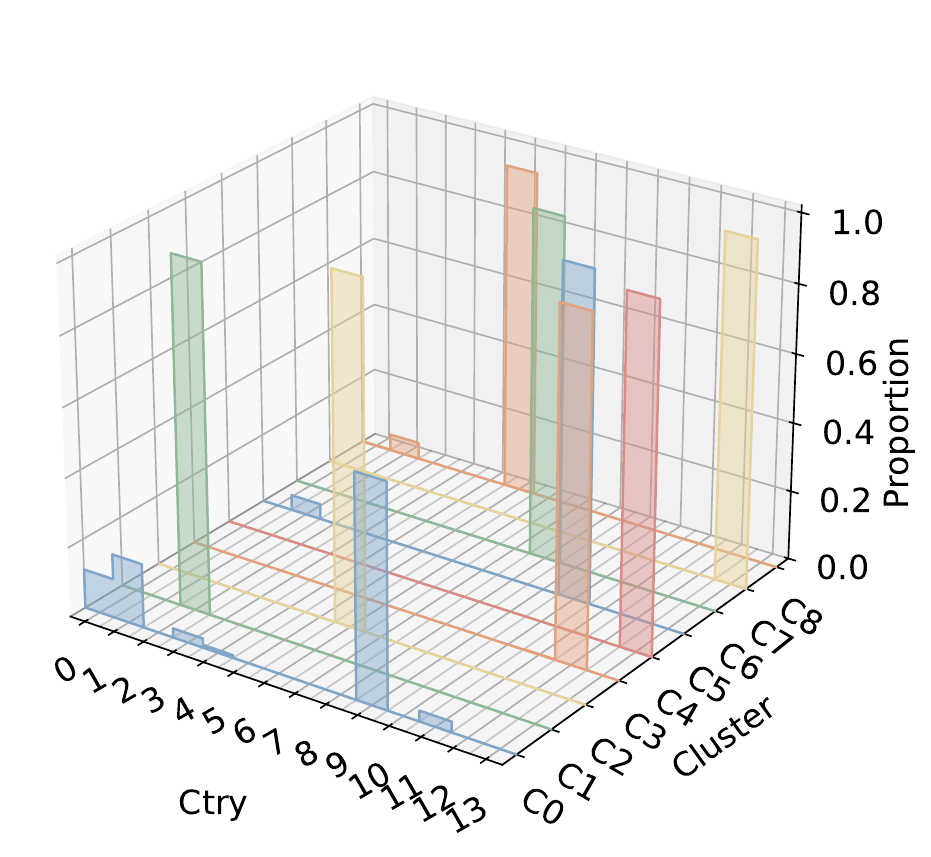}
    \end{minipage}
    \hfill
    \begin{minipage}[t]{0.495\columnwidth}
      \centering
      \includegraphics[width=\linewidth,trim=0 0 0 40,clip]{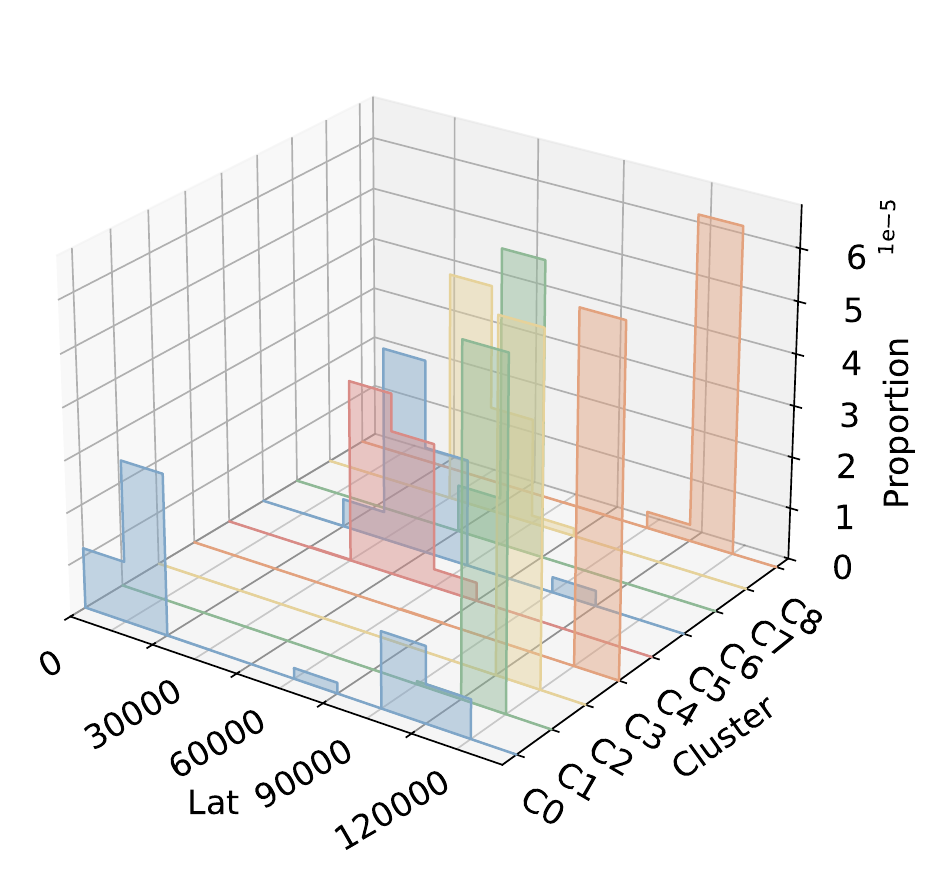}
    \end{minipage}
    \caption{Cluster-wise histogram.}
    \label{fig:geonames-interp-c}
  \end{subfigure}
  \vspace{-0.5em}
  \caption{Case Study on GeoNames.}
  \label{fig:case_study_geonames}
  \vspace{-0.5em}
\end{figure}

\subsection{Case Study} 
\label{sec:expt:case_study}

To showcase the intrinsic interpretability of \textbf{WISE}, we present case studies on Adult and GeoNames (Figures \ref{fig:case_study_adult} and \ref{fig:case_study_geonames}), using the exact cluster assignments from Section \ref{sec:expt:quality}. 
Each figure contains three components: 
(a) \textbf{cluster-level feature weights} derived from DFI, 
(b) the \textbf{top-3 discriminative features} per cluster, and 
(c) \textbf{cluster-wise feature histograms}, validating explanations against the raw data.

\paragraph{Case Study on Adult} 
The Adult dataset illustrates that feature relevance in \textbf{WISE} is explicitly \emph{cluster-dependent}.
As shown in Figures \ref{fig:adult-interp-a} and \ref{fig:adult-interp-b}, clusters rely on distinct feature subsets to separate from others.
For example, clusters \textbf{C0} and \textbf{C1} are primarily distinguished by \texttt{Marital-status} and \texttt{Sex}, together with \texttt{Age} (C0) or \texttt{Occupation} (C1). 
In contrast, cluster \textbf{C3} is more strongly characterized by \texttt{Edu} and \texttt{Occupation}, indicating a different semantic grouping.
These explanations are corroborated by the histograms in Figure~\ref{fig:adult-interp-c}. 
For \texttt{Marital-status} and \texttt{Sex}, clusters with higher DFI weights exhibit clearly separated distributions, while clusters with lower weights show substantial overlap.
This alignment between learned weights and empirical distributions confirms that DFI captures genuine discriminative structure rather than post-hoc artifacts.

\paragraph{Case Study on GeoNames}
A similar pattern emerges for GeoNames. 
Figures~\ref{fig:geonames-interp-a} and~\ref{fig:geonames-interp-b} show that \texttt{Latitude}, \texttt{Longitude}, and \texttt{Country\_code} dominate most clusters, reflecting the primacy of geographic location. 
However, certain clusters (e.g., \textbf{C4} and \textbf{C7}) assign higher importance to  \texttt{Admin2\_code}, indicating that finer-grained administrative features are necessary when country-level signals alone are insufficient.
Figure~\ref{fig:geonames-interp-c} validates these findings: while many clusters form distinct peaks on \texttt{Country\_code} and \texttt{Latitude}, \textbf{C4} and \textbf{C7} exhibit notable overlap on these features, explaining their reduced discriminative weights and the increased reliance on administrative attributes.


\section{Conclusions}
\label{sec:conclusions}

We introduced \textbf{WISE}, a fully unsupervised and intrinsically interpretable framework for clustering mixed-type tabular data. 
WISE integrates granularity-aligned representation, data-driven feature weighting, weight-informed clustering, and explanation into a unified pipeline that operates directly on feature-level semantics rather than opaque latent spaces.
Extensive experiments on six real-world datasets show that WISE consistently outperforms classical and neural baselines on external agreement metrics, remains computationally efficient, and produces faithful, human-interpretable explanations that reflect true discriminative structure.

\section*{Impact Statement}
This work improves unsupervised clustering for mixed-type tabular data by providing a transparent pipeline that aligns heterogeneous features, learns data-driven feature weights, and produces intrinsic, feature-level explanations of discovered cohorts. Such capabilities can benefit exploratory analysis in domains where tabular records are prevalent by enabling more faithful subgroup discovery, better hypothesis generation, and more auditable downstream use of clusters.

Potential negative impacts arise when clustering is applied to sensitive populations or attributes: faster and more interpretable cohorting may facilitate profiling, targeted manipulation, or discriminatory segmentation, and spurious or biased clusters may be mistaken as meaningful structure and propagated into downstream decisions. We recommend deploying the method with appropriate access controls and privacy safeguards, and conducting careful validation for stability, bias, and failure cases (especially when clusters inform high-stakes decisions), including scrutiny of sensitive features and human oversight of any consequential use.


\bibliography{reference}
\bibliographystyle{icml2026}

\newpage
\appendix
\onecolumn

\section{Implementation Details}
\label{app:impl}

\subsection{Binary Encoding with Padding}
\label{app:impl:bep}

BEP assigns a per-column bit budget $B$, but the within-column encoding depends on the semantic type of the feature. In particular, we explicitly distinguish ordinal from nominal categorical attributes.

\paragraph{Numerical and ordinal features.}
For numerical features, we use the positional-padding scheme: after scaling to the column range, a value is encoded by a contiguous block of $B$ ones whose offset reflects its relative magnitude. We treat ordinal categorical features in exactly the same way. Specifically, when a categorical column has a natural order available from domain semantics or dataset documentation, we first map its levels to ordered integers and then pass the resulting scalar values through the numerical BEP encoder. This preserves ordinal proximity in the BEP space: adjacent levels have overlapping blocks, while distant levels have less overlap or none. In other words, ordinal variables are not one-hot encoded in WISE; they are processed by the same BEP mechanism as numerical features.

\paragraph{Nominal categorical features.}
For nominal features, no intrinsic order is assumed, so categories are treated as equidistant. When a nominal feature $x_j$ has $c_j$ distinct categories and $c_j \leq B$, we use a collision-free one-hot block for column $j$. When $c_j > B$, a collision-free length-$B$ representation is impossible, and we support two practical treatments in our implementation.

\paragraph{Option 1: Hash-BEP}
We map categories to $B$ buckets using a hash function $h_j(\cdot)\in\{0,\dots,B-1\}$ and encode category $a$ by activating the single coordinate $h_j(a)$.
This keeps the BEP dimensionality and runtime predictable while allowing collisions between distinct categories that hash to the same bucket (the standard trade-off in feature hashing) \cite{weinberger2009feature}.

\paragraph{Option 2: Expand-BEP}
We allocate one bit per category and use a length-$c_j$ one-hot block for column $j$, i.e., we allow this column to exceed the nominal budget $B$.
This preserves category identity exactly, but increases the overall BEP dimensionality and can raise memory/time costs downstream.

These choices affect only the within-column encoding and do not change the rest of the WISE pipeline. Ordinal columns still pass through the same numerical BEP encoder, while nominal columns under standard one-hot, Hash-BEP, or Expand-BEP all produce sparse binary set vectors. As a result, the subsequent weighted-Jaccard computations and the weighted $k$-FreqItems backbone operate unchanged; only the coordinate allocation differs. Moreover, Hash-BEP preserves sparsity and has controlled distortion guarantees, suggesting that occasional collisions have limited effect in practice when $B$ is moderately sized \cite{weinberger2009feature}.

\subsection{From TreeSHAP to Feature Weight Vector}
\label{app:impl:treeshap}

We explain how WISE uses TreeSHAP \cite{lundberg2020treeshap} to convert Leave-One-Feature-Out (LOFO) prediction models into feature weight vectors.

For each target feature $x_j$ we construct a LOFO prediction problem with target $y \leftarrow x_j$ and inputs $X \leftarrow X_{-j}$. We train a Random Forest (RF) model and view each tree with index $u$ in the ensemble as a ``local evaluator'' of how other features contribute to predicting $x_j$.
We then use TreeSHAP to quantify, for this tree $u$, which input features in $X_{-j}$ are most responsible for its predictions.
Given an input row $\bm{x}_i$ with the $j$-th feature $f_j$ removed, TreeSHAP returns per-feature attributions $\bm{\phi}^{(j,u)}(\bm{x}_i) \in \mathbb{R}^{d-1}$ satisfying the standard additive decomposition
\begin{displaymath}
    f_{j,u}(\bm{x}_i)
    \;=\;
    \mathbb{E}\!\left[f_{j,u}(\bm{X})\right]
    \;+\;
    \sum_{k\neq j}\phi^{(j,u)}_{k}(\bm{x}_i),
\end{displaymath}
where $f_{j,u}$ denotes the prediction function of tree $u$ in the LOFO model for target $x_j$ \cite{lundberg2017shap, lundberg2018consistent, lundberg2020treeshap}.
In practice, we compute TreeSHAP using \texttt{shap.TreeExplainer} with \texttt{feature\_perturbation="interventional"} and a small background set sampled from training data, which defines the reference distribution used in the Shapley expectations \cite{shapdocs_treeexplainer}.

WISE needs a global nonnegative dependency signature per tree, rather than per-instance explanations. We therefore aggregate local TreeSHAP values over a set of explained rows $E$ by taking the mean absolute attribution:
\begin{displaymath}
    \bm{s}_{j,u}(k)
    \;=\;
    \frac{1}{|E|}
    \sum_{\bm{x}_i\in E}
    \left|\phi^{(j,u)}_{k}(\bm{x}_i)\right|
    \quad \in \mathbb{R}_{\ge 0}^{d-1},
\end{displaymath}
for each input feature with index $k\neq j$.
This follows the standard ``local-to-global'' usage pattern: combine many faithful local explanations to summarize model structure at the population level \citep{lundberg2020treeshap}.

\subsection{Quality-Diversity Greedy Search}
\label{app:impl:qd-greedy}
Given the Quality-Diversity objective \cite{gollapudi2009result-diversification, borodin2017maxsum}:
\begin{displaymath}
  J_j(S)= \frac{\lambda }{|S|} \sum_{u\in S} q_{j,u} + \frac{(1-\lambda) \cdot 2}{|S|(|S|-1)} \sum_{u<v \in S} (1-\cos(u,v)),
\end{displaymath}
we maximize $J_j$ via greedy search, starting from the highest quality tree
$\mathcal{U}_j^{(1)}=\{\arg\max_u q_{j,u}\}$ and iteratively adding the tree with the largest marginal gain until $|\mathcal{U}_j|=m$.
\begin{displaymath}
  u^\star = \arg\max_{r \notin \mathcal{U}_j^{(t)}} \Delta J_j \bigl(r;\mathcal{U}_j^{(t)}\bigr),~\mathcal{U}_j^{(t+1)} = \mathcal{U}_j^{(t)} \cup \{u^\star\}.
\end{displaymath}

For efficient evaluation, let $k=|S|$, $Q(S) = \sum_{u \in S} q_{j,u}$, $P(S) = \sum_{u<v \in S} 1-\cos(u,v)$, and $A(r;S)=\sum_{u\in S} 1-\cos(u,r)$. 
For any candidate $r\notin S$, the objective after inclusion admits the closed form
\begin{displaymath}
  J_j(S\cup\{r\}) = \lambda \cdot \frac{Q(S)+q_{j,r}}{k+1} + (1-\lambda) \cdot \frac{2}{k(k+1)}\Big(P(S)+A(r;S)\Big),
\end{displaymath}
enabling the marginal gain $\Delta J_j(r;S)=J_j(S\cup\{r\})-J_j(S)$ to be computed in $O(k)$ time, hence the greedy process is done in $O(k^2)$ time.

\subsection{Weighted $k$-FreqItems}
\label{app:impl:w_kfreq}
\paragraph{From mixed-type tabular data to sparse set data}
After the BEP procedure, each recode is represented as a high-dimensional sparse binary vector $\bm{x}\in\{0,1\}^d$, or equivalently, a set of active coordinates. This allows us to use $k$-FreqItems, a scalable Lloyd-style clustering method for sparse set data under Jaccard distance \cite{huang2023kfreqitems}.
For two binary set-vectors $\bm{x}, \bm{y}\in\{0,1\}^d$ with supports $X=\{t:x_t=1\}$ and $Y=\{t:y_t=1\}$, their Jaccard similarity and distance are
\begin{displaymath}
    J(\bm{x}, \bm{y}) \;=\ \frac{|X\cap Y|}{|X\cup Y|},
    \qquad
    d_J(\bm{x}, \bm{y}) \;=\; 1 - J(\bm{x},\bm{y}).
\end{displaymath}

\paragraph{FreqItem: A Sparse Center Representation}
Given a cluster $\mathcal{S}\subseteq \mathcal{X}$, $k$-FreqItems represents its center by a sparse set of frequent coordinates, called a \textbf{FreqItem} \cite{huang2023kfreqitems}.
Let $f_t = |\{ \bm{x}\in\mathcal{S}: x_t=1\}|$ be the frequency of coordinate $t$ within $\mathcal{S}$, and
$f_{\max}=\max_t f_t$.
With a sparsity control parameter $\alpha\in[0,1]$, the (unweighted) FreqItem center keeps coordinates whose
frequency is at least a fraction of the maximum:
\begin{displaymath}
    \mathrm{FreqItem}_\alpha(\mathcal{S})
    \;=\;
    \bigl\{t:\; f_t \ge \max(1,\lceil \alpha f_{\max}\rceil)\bigr\}.
    \label{eq:freqitem_binary}
\end{displaymath}
This interpolates between a dense ``Mode1''-style center ($\alpha=0$) and a very sparse ``Mode2''-style center
($\alpha\rightarrow 1$) while remaining efficient to compute on sparse data.

\paragraph{$k$-FreqItems Iterations}
Given $K$ centers $\{\bm{c}_1,\dots,\bm{c}_K\}$ represented as FreqItems, the algorithm alternates: 
(i) assignment by nearest-center Jaccard distance;
and (ii) update by recomputing each
center as the FreqItem of its assigned cluster.

\paragraph{SILK Seeding}
Initialization is performed by \textbf{SILK}, which first overseeds via a two-level MinHash procedure and then reduces the overseeded set to $K$ centers using a $k$-means$\parallel$-style reclustering \cite{bahmani2012scalable}.
Concretely, MinHash-based LSH produces buckets of similar data; a second MinHash pass groups buckets into bins, from which frequent data subsets are extracted, deduplicated, and converted into initial centers by FreqItem computation. The resulting center set is then downsampled to $K$ centers before the standard $k$-FreqItems iterations proceed.

\paragraph{Why We Need Weights}
WISE senses diverse feature weightings via LOFO and bias clustering toward important features in the first-stage clustering.
Since the original $k$-FreqItems is defined for unweighted set data, we extend it to a \textbf{weighted} variant by modifying the distance, the LSH used in SILK seeding, and the center update.

\paragraph{Weighted Set Representation}
We associate each coordinate $t\in\{1,\dots,d\}$ with a nonnegative weight $\omega_t\ge 0$ derived from WISE.
Each record becomes a sparse nonnegative vector $\bm{v}\in\mathbb{R}_+^d$ where
\begin{displaymath}
v_t = \omega_t \cdot x_t \quad (\text{so } v_t\in\{0,\omega_t\} \text{ under binary supports}).
\end{displaymath}

\paragraph{Weighted Jaccard}
For two weighted vectors $\bm{v},\bm{u}\in\mathbb{R}_+^d$, the weighted Jaccard similarity is
\begin{equation}
J_w(\bm{v},\bm{u})
\;=\;
\frac{\sum_{t=1}^d \min(v_t,u_t)}{\sum_{t=1}^d \max(v_t,u_t)},
\qquad
d_w(\bm{v},\bm{u}) = 1 - J_w(\bm{v},\bm{u}).
\label{eq:weighted_jaccard}
\end{equation}

\paragraph{Weighted MinHash}
Standard MinHash produces signatures whose collision probability equals Jaccard similarity and thus enables LSH for set similarity \cite{broder1998min-wise}. SILK relies on MinHash-based LSH to efficiently propose candidate centers from massive sparse data \cite{huang2023kfreqitems}.
To extend SILK to weighted Jaccard, we replace MinHash with Weighted MinHash via Consistent Weighted Sampling (CWS), where hash collisions match weighted Jaccard similarity \cite{manasse2010consistent, ioffe2010improved}. A common CWS instantiation samples, for each coordinate $t$ and hash function $h$, random variables $r_{t,h}\sim \mathrm{Gamma}(2,1)$, $c_{t,h}\sim \mathrm{Gamma}(2,1)$, and
$\beta_{t,h}\sim \mathrm{Uniform}(0,1)$, and computes a key $a_{t,h}$ from $v_t$; the coordinate achieving the
minimum key becomes the hash output. CWS satisfies
\begin{displaymath}
    \Pr\bigl[h(\bm{v}) = h(\bm{u})\bigr] \;=\; J_w(\bm{v},\bm{u}),
\end{displaymath}
making it an LSH primitive for weighted Jaccard \cite{ioffe2010improved}.

In the unweighted method, a center is a set of frequent coordinates selected by frequency thresholding. In the weighted variant, we additionally store a center weight for each selected coordinate. We aggregate within a cluster $\mathcal{S}$ for each coordinate $t$:
\begin{displaymath}
    f_t = |\{ \bm{x}\in\mathcal{S}: x_t=1\}|,
    \qquad
    s_t = \sum_{\bm{x}\in\mathcal{S}} v_t,
\end{displaymath}
and keep coordinates whose aggregate weight $s_t$ is large relative to the maximum:
\begin{displaymath}
    \mathcal{I}(\mathcal{S})
    \;=\;
    \bigl\{t:\; s_t \ge \alpha \cdot s_{\max}\bigr\}, \quad s_{\max}=\max_t s_t.
\end{displaymath}
For each retained coordinate, we store the average contribution
\begin{displaymath}
    \bar{\omega}_t \;=\; \frac{s_t}{\max(1,f_t)},
\end{displaymath}
and represent the center as a sparse weighted vector over $\mathcal{I}(\mathcal{S})$. This yields \textbf{weighted FreqItems} that remain sparse while aligning the center representation with the weighted assignment distance.

\paragraph{Summary} Our weighted $k$-FreqItems extension differs from the original $k$-FreqItems in: 
(i) replacing Jaccard distance $d_J$ with weighted Jaccard distance $d_w$ for assignments;
(ii) replacing MinHash with CWS-based Weighted MinHash in SILK's two-level LSH seeding; and
(iii) updating centers using weighted aggregation.
These extensions preserve the scalability benefits of the original pipeline, and make the backbone compatible with WISE's learned feature weightings.

\subsection{Dataset Details and Preprocessing}
\label{app:impl:datasets}

\begin{table*}[t]
\centering
\small
\renewcommand{\arraystretch}{1.2}
\caption{The statistics of six real-world mixed-type tabular datasets.}
\label{tab:data_sets_stats}
\begin{tabular}{lrrrrlr}
\toprule
\rowcolor[HTML]{D9EAD3}
\textbf{Dataset} & \textbf{\# Samples} & \textbf{\# Dimensions} & \textbf{\# Num. Features} & \textbf{\# Cat. Features} & \textbf{Ground-Truth Feature} & \textbf{\# Classes} \\
\midrule
\textbf{Adult}    & 45{,}222  & 14 & 6  & 8  & Relationship         & 6 \\
\textbf{Vermont}  & 129{,}816 & 48 & 8  & 40 & INCWAGE              & 4 \\
\textbf{Arizona}  & 203{,}353 & 50 & 8  & 42 & INCWAGE              & 4 \\
\textbf{Obesity}  & 2{,}111   & 16 & 8  & 8  & Obesity Level        & 7 \\
\textbf{Credit}   & 690       & 15 & 6  & 9  & A16-class attribute  & 2 \\ 
\textbf{GeoNames} & 500{,}000 & 11 & 5  & 6  & Feature Class        & 10 \\
\bottomrule
\end{tabular}
\end{table*}


We evaluate \textbf{WISE} on six real-world mixed-type tabular datasets spanning demography, healthcare, finance, and geography. 
Table~\ref{tab:data_sets_stats} summarizes their dataset sizes and numerical/categorical feature compositions. 
Unless otherwise specified, ground-truth labels are used \emph{only} for ex post evaluation; all clustering methods operate in a fully unsupervised manner.
\begin{itemize}[nosep, left=5pt]
  \item \textbf{Adult:}
  We use the UCI Adult (Census Income) dataset~\cite{uci_adult_1996}. 
  While the original task is binary income prediction, we derive a multi-class ground truth by using the \texttt{relationship} attribute, which contains six categories. 
  Records with missing values are removed, yielding the final sample size reported in Table~\ref{tab:data_sets_stats}.

  \item \textbf{Vermont and Arizona:}
  These datasets are drawn from Match~3 of the 2018 Differential Privacy Synthetic Data Challenge, corresponding to US Census Bureau PUMS files for Vermont and Arizona \cite{urban_dp_vermont_2020, urban_dp_arizona_2020}. 
  We use the wage attribute as ground truth and discretize it into four classes:
  0 ($\text{wage} = 0$), 1 ($0 < \text{wage} \leq 500$), 2 ($500 < \text{wage} \leq 1{,}000$), and 3 ($\text{wage} >1{,}000$).

  \item \textbf{Obesity:}
  We use the Obesity Levels dataset introduced by Palechor and de la Hoz Manotas~\cite{palechor2019obesity}, which provides seven obesity-level categories as ground-truth labels.

  \item \textbf{Credit:}
  We use the UCI Credit Approval dataset~\cite{uci_credit_approval_1987}, a mixed-attribute tabular dataset, with the target attribute \texttt{A16} serving as ground truth.
  
  \item \textbf{GeoNames:}
  We use the GeoNames geographical database~\cite{geonames}. 
  The \texttt{feature class} field is used as the ground-truth label, while the finer-grained \texttt{feature code} is removed to avoid label redundancy. To ensure scalability, we uniformly subsample 500{,}000 records from the original dump.
\end{itemize}

\subsection{Evaluation Metrics}
\label{app:impl:metrics}

We evaluate clustering performance using a comprehensive set of six widely adopted metrics that jointly capture external agreement, intrinsic structural quality, and computational efficiency (Table \ref{tab:metrics_summary}).
Such a multi-faceted evaluation is particularly important for mixed-type tabular data, where different metrics highlight complementary aspects of clustering behavior.

\begin{table*}[t]
\centering
\small
\renewcommand{\arraystretch}{1.2}
\caption{Evaluation metrics for clustering quality and efficiency.}
\label{tab:metrics_summary}
\begin{tabular}{lll}
\toprule
\rowcolor[HTML]{D9EAD3}
\textbf{Evaluation Facet} & \textbf{Metric} & \textbf{Computation} \\
\midrule
\multirow{4}{*}{\textbf{External Agreement}} 
   & ARI \cite{hubert1985ARI}      & Computed on non-noise set \\
   & NMI  \cite{vinh2009NMI}       & Computed on non-noise set \\
   & Purity \cite{manning2008ir}   & Computed on non-noise set \\
   & ACC \cite{kuhn1955hungarian}  & Computed on non-noise set \\ 
\textbf{Intrinsic Structural Quality} & SWC \cite{rousseeuw1987silhouette} & Calculated using unweighted Gower's distance \cite{gower1971gower-distance} \\
\textbf{Computational Efficiency}  & Time & Total wall-clock time (Seconds) \\
\bottomrule
\end{tabular}
\end{table*}

\paragraph{External Agreement}
When ground-truth labels are available, we report four standard external metrics to quantify the agreement between predicted clusters and reference partitions.
Specifically, we use 
(1) Adjusted Rand Index (\textbf{ARI}) \cite{hubert1985ARI}, which corrects for chance agreement and penalizes both over- and under-segmentation; 
(2) Normalized Mutual Information (\textbf{NMI}) \cite{vinh2009NMI}, which measures shared information in an information-theoretic manner; 
(3) \textbf{Purity} \cite{manning2008ir}, which reflects the dominance of ground-truth classes within clusters; and 
(4) Clustering Accuracy (\textbf{ACC}). Following standard practice, ACC is computed by optimally permuting cluster labels using the Hungarian algorithm \cite{kuhn1955hungarian} before measuring accuracy. 
Together, these metrics provide complementary perspectives on clustering correctness and label-level consistency.

\paragraph{Intrinsic Structural Quality}
To evaluate clustering structure independently of ground-truth labels, we additionally report the Silhouette Coefficient (\textbf{SWC}) \cite{rousseeuw1987silhouette}, which jointly captures intra-cluster cohesion and inter-cluster separation.
As our datasets consist of heterogeneous numerical and categorical attributes, SWC is computed using pairwise Gower distances \cite{gower1971gower-distance}, enabling a principled and unified treatment of mixed feature types and avoiding distortions introduced by purely Euclidean or categorical distances.

\paragraph{Computational Efficiency}
Beyond clustering quality, we assess computational efficiency using end-to-end wall-clock running time (\textbf{Time}, in seconds). 
All methods are evaluated on the same hardware and software stack, and reported times include the complete clustering pipeline--covering preprocessing, representation learning (if applicable), and clustering, to ensure fair comparisons between classical and learning-based approaches and reflect practical deployment costs.

\subsection{Hyperparameter Settings}
\label{app:impl:params}

For all baselines, we use the hyperparameters and training protocols recommended in their original papers and/or official implementations \cite{huang1997kprototypes, svirsky2024IDC, rauf2025TableDC, peng2022tell, somepalli2021saint}. For baselines that cannot directly consume mixed-type inputs, we apply the preprocessing required by their original designs. In particular, for IDC, we evaluate both one-hot encoding and target encoding; for TableDC, we use its auto-encoding pipeline before clustering. For methods that require the number of clusters $K$ as an input but do not infer $K$ automatically, we adopt a controlled cluster-recovery protocol. Specifically, for each dataset with reference class count $K^\ast$, all such methods are evaluated on the same pre-specified candidate grid of $K$ values that covers $K^\ast$, and we report the best-performing setting within that shared grid. This protocol uses external information and should therefore be interpreted as assessing how well each method recovers the benchmark partition granularity, rather than as fully unconstrained unsupervised model selection. Importantly, we do not choose a separate favorable $K$ range for WISE or for any baseline. Different methods may attain their best performance at different values within the same grid; when a method peaks closer to $K^\ast$, this suggests that its preferred clustering granularity is more consistent with the reference partition. The shared candidate grids are listed in Table~\ref{tab:app:k_grid}.

\begin{table}[t]
\centering
\caption{Shared candidate grids of $K$ used in the controlled cluster-recovery protocol.}
\label{tab:app:k_grid}
\begin{tabular}{ll}
\toprule
\rowcolor[HTML]{D9EAD3}
\textbf{Dataset} & \textbf{Candidate grid of $K$} \\
\midrule
\textbf{Adult}    & $\{4,5,6,7,8\}$ \\
\textbf{Obesity}  & $\{2,\dots,10,15,20\}$ \\
\textbf{Credit}   & $\{1,2,3,4,5,6\}$ \\
\textbf{Vermont}  & $\{2,\dots,10,15,20,25,30\}$ \\
\textbf{Arizona}  & $\{2,\dots,10,15,20,25,30\}$ \\
\textbf{GeoNames} & $\{5,\dots,15,20,25,30\}$ \\
\bottomrule
\end{tabular}
\end{table}
\newcolumntype{P}[1]{>{\raggedright\arraybackslash}p{#1}}

\begin{table}[t]
\centering
\small
\renewcommand{\arraystretch}{1.2}
\setlength{\tabcolsep}{4pt} 
\caption{Dataset-specific hyperparameter settings for \textbf{WISE}. $B$ is the number of bits assigned to each column in BEP. LOFO weight sensing uses $T$ trees (shared defaults: $m{=}3$; Q-D balance $\lambda_{\mathrm{QD}}{=}0.5$; \texttt{max\_depth} $=20$; \texttt{min\_samples\_leaf} $=50$; \texttt{train\_sample\_frac} $=0.1$). Clustering Stage~I runs $k$-FreqItems on BEP with $(k_0,\alpha_0,\beta_0)$; Stage~II refines in record space with $(K,\alpha)$.}
\label{tab:app:wise_params_all}

\begin{tabular}{lccP{0.36\columnwidth}P{0.21\columnwidth}}
\toprule
\rowcolor[HTML]{D9EAD3}
\textbf{Dataset} & \textbf{$B$} & \textbf{$T$} &
\textbf{Stage~I Clustering: $(k_0,\alpha_0,\beta_0)$} &
\textbf{Stage~II Clustering: $(K,\alpha)$} \\
\midrule
\textbf{Adult}    & 32 & 30 & $k_0{=}15,\ \alpha_0{=}0.4,\ \beta_0{=}0.9$ & $K{=}5,\ \alpha{=}0.4$ \\
\textbf{Vermont}  & 64 & 30 & $k_0{=}10,\ \alpha_0{=}0.4,\ \beta_0{=}0.4$ & $K{=}4,\ \alpha{=}0.4$ \\
\textbf{Arizona}  & 64 & 30 & $k_0{=}5,\ \alpha_0{=}0.7,\ \beta_0{=}0.6$  & $K{=}6,\ \alpha{=}0.5$ \\
\textbf{Obesity}  & 32 & 30 & $k_0{=}25,\ \alpha_0{=}0.4,\ \beta_0{=}0.4$ & $K{=}5,\ \alpha{=}0.7$ \\
\textbf{Credit}   & 32 & 10 & $k_0{=}5,\ \alpha_0{=}0.2,\ \beta_0{=}0.1$  & $K{=}2,\ \alpha{=}0.2$ \\
\textbf{GeoNames} & 64 & 30 & $k_0{=}50,\ \alpha_0{=}0.2,\ \beta_0{=}0.2$ & $K{=}9,\ \alpha{=}0.2$ \\
\bottomrule
\end{tabular}
\end{table}

\begin{table}[!t]
\centering
\small
\renewcommand{\arraystretch}{1.2}
\caption{Key hyperparameters for baselines (others use official defaults). Abbreviations: OH = one-hot encoding; TE = target encoding; AE = auto-encoding; ER = entity-resolution mode.}
\label{tab:app:baseline_params}
\begin{tabular}{lll}
\toprule
\rowcolor[HTML]{D9EAD3}
\textbf{Dataset} & \textbf{Method} & \textbf{Key Hyperparameters / Settings} \\
\midrule

\multirow{5}{*}{\textbf{Adult}}
& \textbf{k-Proto} & $k{=}5,\ \gamma{=}0.06$ (better than nominal $k{=}6$) \\
& \textbf{IDC} & epochs$=800$; start\_global\_gates\_training\_on\_epoch$=400$; encoding: TE \\
& \textbf{TableDC} & AE dim$=64$; mode: ER; num\_clusters$=6$ \\
& \textbf{TELL} & num\_clusters$=6$ \\
& \textbf{SAINT} & $k{=}6$ \\

\midrule
\multirow{5}{*}{\textbf{Vermont}}
& \textbf{k-Proto} & $k{=}4,\ \gamma{=}0.107$ \\
& \textbf{IDC} & epochs$=2000$; start\_global\_gates\_training\_on\_epoch$=1000$; encoding: OH \\
& \textbf{TableDC} & AE dim$=64$; mode: ER; num\_clusters$=4$ \\
& \textbf{TELL} & num\_clusters$=4$ \\
& \textbf{SAINT} & $k{=}4$ \\

\midrule
\multirow{5}{*}{\textbf{Arizona}}
& \textbf{k-Proto} & $k{=}5,\ \gamma{=}0.11$ (better than nominal $k{=}4$) \\
& \textbf{IDC} & epochs$=2000$; start\_global\_gates\_training\_on\_epoch$=1000$; encoding: OH \\
& \textbf{TableDC} & AE dim$=64$; mode: ER; num\_clusters$=4$ \\
& \textbf{TELL} & num\_clusters$=4$ \\
& \textbf{SAINT} & $k{=}4$ \\

\midrule
\multirow{5}{*}{\textbf{Obesity}}
& \textbf{k-Proto} & $k{=}5,\ \gamma{=}0.122$ (better than nominal $k{=}7$) \\
& \textbf{IDC} & epochs$=400$; start\_global\_gates\_training\_on\_epoch$=200$; encoding: TE \\
& \textbf{TableDC} & AE dim$=64$; mode: ER; num\_clusters$=7$ \\
& \textbf{TELL} & num\_clusters$=7$ \\
& \textbf{SAINT} & $k{=}7$ \\

\midrule
\multirow{5}{*}{\textbf{Credit}}
& \textbf{k-Proto} & $k{=}2,\ \gamma{=}0.05$ \\
& \textbf{IDC} & epochs$=100$; start\_global\_gates\_training\_on\_epoch$=50$; encoding: OH \\
& \textbf{TableDC} & AE dim$=64$; mode: ER; num\_clusters$=2$ \\
& \textbf{TELL} & num\_clusters$=2$ \\
& \textbf{SAINT} & $k{=}2$ \\

\midrule
\multirow{5}{*}{\textbf{GeoNames}}
& \textbf{k-Proto} & $k{=}10,\ \gamma{=}0.065$ \\
& \textbf{IDC} & epochs$=2000$; start\_global\_gates\_training\_on\_epoch$=1000$; encoding: OH \\
& \textbf{TableDC} & AE dim$=64$; mode: ER; num\_clusters$=10$ \\
& \textbf{TELL} & num\_clusters$=10$ \\
& \textbf{SAINT} & $k{=}10$ \\

\bottomrule
\end{tabular}
\end{table}

\paragraph{WISE}
WISE consists of (i) mixed-type conversion via BEP, (ii) LOFO-based weight sensing, (iii) two-stage weight-aware clustering, and (iv) DFI-based interpretation.
In our implementation, most hyperparameters follow our released code defaults (or the defaults of the underlying libraries), and we report the key representation, sensing, and clustering hyperparameters that affect performance.
Concretely, BEP maps each original feature column to a length-$B$ binary block, where $B$ controls the per-column encoding resolution.
For LOFO-based weight sensing (Module~2), we train Random Forest predictors with $T$ trees and select $m$ trees under a quality--diversity (Q--D) objective.
For clustering (Module~3), Stage~I runs repeated $k$-FreqItems on BEP with $(k_0,\alpha_0,\beta_0)$, and Stage~II refines clusters in record space with $(K,\alpha)$. 
Table~\ref{tab:app:wise_params_all} summarizes the dataset-specific settings and the shared defaults used across datasets.
For Stage~I, we follow the tuning guidance of the original $k$-FreqItems procedure \cite{huang2023kfreqitems}: we run $k$-FreqItems on BEP representations and select $(k_0,\alpha,\beta)$ by minimizing MSE under Jaccard distance, where $k_0$ is chosen via an elbow-style sweep.
In our setting, $k_0$ is typically set larger than the final $K$, since Stage~II can automatically merge overlapping Stage~I clusters based on co-occurrence in the record embedding.
Stage~II parameters $(K,\alpha)$ are tuned analogously on the one-hot record embedding of Stage~I assignments, again using MSE under Jaccard distance as the objective.

\paragraph{Baselines}
Table~\ref{tab:app:baseline_params} reports the key hyperparameters that materially affect performance for each baseline; all other hyperparameters are left as default in the official code/recommended setup.
For IDC, we additionally report the encoding choice (one-hot vs.\ target encoding).
For TableDC, we report the embedding dimension and the mode used for converting mixed-type inputs.
When a nearby $k$ yields better results than the nominal $K$, we use that value and note it in the table.

\subsection{Experiment Environment}
\label{app:impl:environment}

All experiments were conducted on a high-performance workstation equipped with an Intel\textsuperscript{\textregistered} Xeon\textsuperscript{\textregistered} Platinum 8480C @ 3.80GHz, 256~GB of RAM, and an NVIDIA H200 GPU to ensure consistent and comparable results. 

\section{Pseudocode}
\label{app:pseudo}

In this section, we provide high-level pseudocode for the \textbf{WISE} framework in Algorithm~\ref{alg:wise_pipeline}.

\begin{algorithm}[h]
\caption{\textsc{WISE}: Weight-Informed Self-Explaining Clustering for Mixed-Type Tabular Data}
\label{alg:wise_pipeline}
\begin{algorithmic}[1]
\REQUIRE 
Dataset $X \in \mathbb{R}^{n\times d}$ with domain info; 
BEP bits $B$;
LOFO params $(T,m,\lambda_{\mathrm{QD}})$;
Stage~I clustering param $k_0$; 
Stage~II clustering param $K$.
\ENSURE 
Final labels $y \in \{0,\dots,K-1\}^n$;
explanations $W_{\mathrm{cluster}}, W_{\mathrm{instance}}$.

\STATE \textbf{(Module 1)} $X_{\mathrm{BEP}} \gets \mathrm{BEP}(X;B)$
\hfill \COMMENT{$X_{\mathrm{BEP}}\in\{0,1\}^{n\times p}$; feature $j$ maps to a fixed bit-group}

\STATE \textbf{(Module 2)} $\{\bm{w}^{(r)}\}_{r=1}^{R} \gets \textsc{LOFO-Sensing}(X;T,m,\lambda_{\mathrm{QD}})$
\hfill \COMMENT{$\bm{w}^{(r)}\in\mathbb{R}^d_{\ge 0},~\|\bm{w}^{(r)}\|_1=1$}

\STATE \textbf{(Module 3) Stage~I Clustering}
\STATE Initialize record matrix $L \in \{0,\dots,k_0-1\}^{n\times R}$

\FOR{$r=1$ \TO $R$}
  \STATE Lift feature-weights to bit-weights: for each bit $b$ from feature $j$, set $v^{(r)}_b \gets w^{(r)}_j$
  \STATE $\ell_r(\cdot) \gets \textsc{Weighted-}k\textsc{-FreqItems}(X_{\mathrm{BEP}}, \bm{v}^{(r)}, k_0)$
  \FOR{$i=1$ \TO $n$}
    \STATE $L[i,r] \gets \ell_r(i)$
  \ENDFOR
\ENDFOR

\STATE \textbf{Stage~II Clustering}
\STATE $Z \gets \textsc{OneHot}(L; k_0)$ \hfill \COMMENT{$Z\in\{0,1\}^{n\times (Rk_0)}$}
\STATE $y(\cdot) \gets k\textsc{-FreqItems}(Z, K)$

\STATE \textbf{(Module 4)} $(W_{\mathrm{cluster}}, W_{\mathrm{instance}}, \mathrm{DFI})
\gets \textsc{DFI-Explain}(L, y, \{\bm{w}^{(r)}\}_{r=1}^{R})$

\STATE \textbf{RETURN} $(y, W_{\mathrm{cluster}}, W_{\mathrm{instance}})$
\end{algorithmic}
\end{algorithm}

\begin{figure*}[!t]
  \centering
  \begin{subfigure}[t]{0.99\textwidth}
    \centering
    \includegraphics[width=\linewidth]{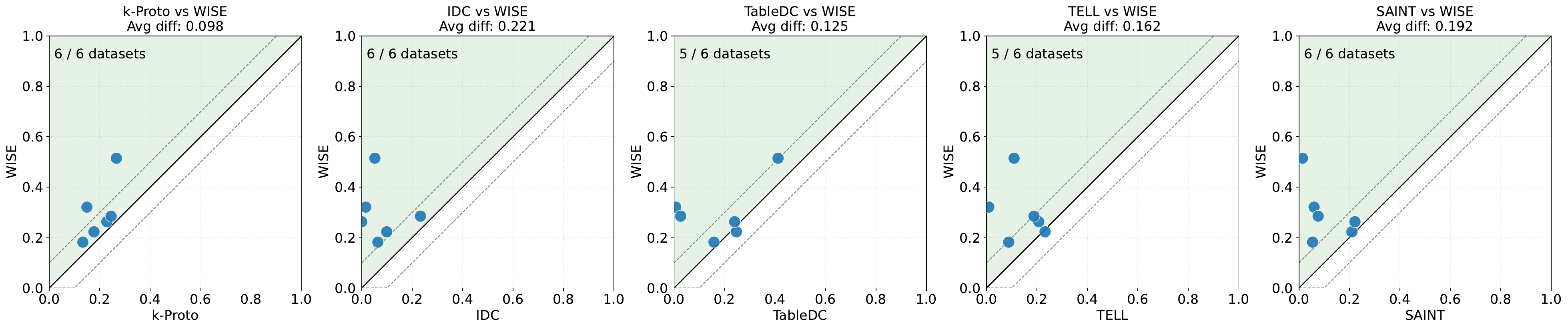}
    \caption{NMI}
    \label{fig:pairwise_nmi}
  \end{subfigure}
  \begin{subfigure}[t]{0.99\textwidth}
    \centering
    \includegraphics[width=\linewidth]{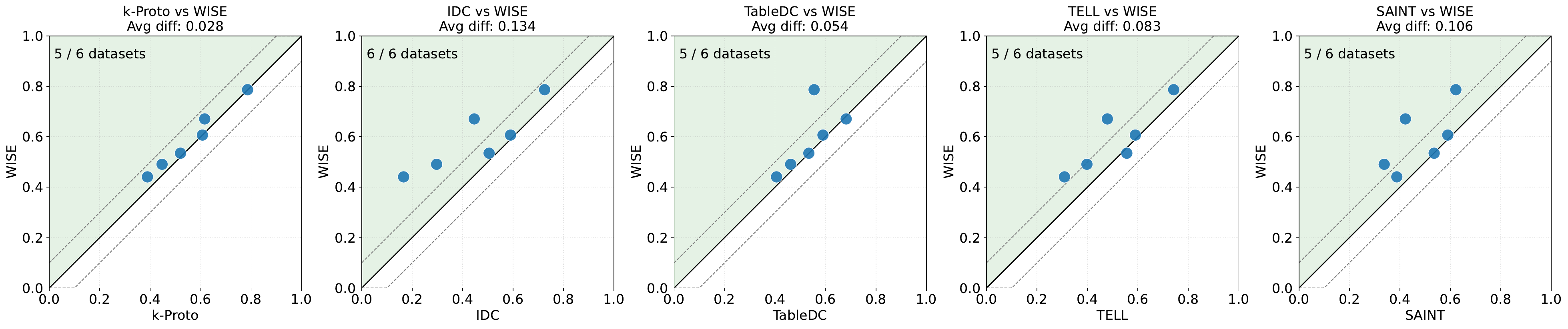}
    \caption{Purity}
    \label{fig:pairwise_purity}
  \end{subfigure}
  \begin{subfigure}[t]{0.99\textwidth}
    \centering
    \includegraphics[width=\linewidth]{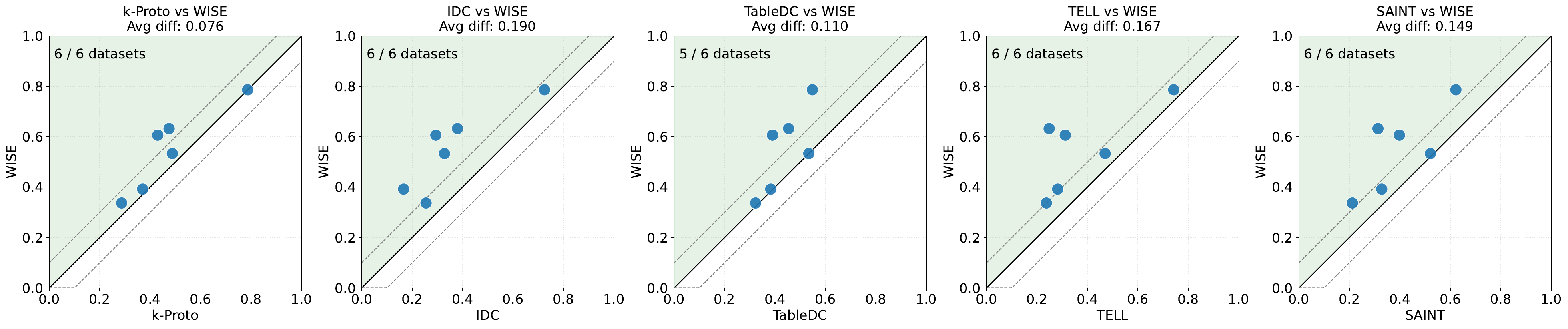}
    \caption{ACC}
    \label{fig:pairwise_acc}
  \end{subfigure}
  \begin{subfigure}[t]{0.99\textwidth}
    \centering
    \includegraphics[width=\linewidth]{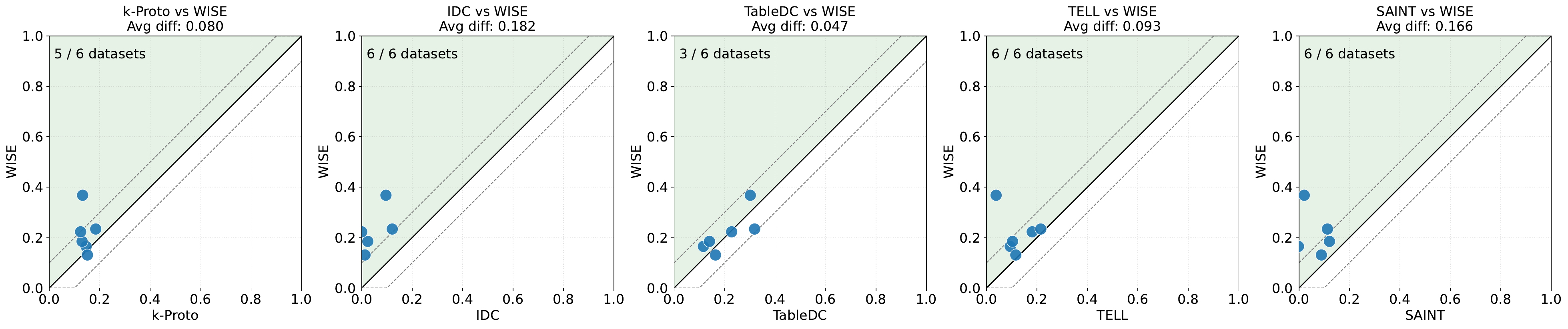}
    \caption{SWC}
    \label{fig:pairwise_swc}
  \end{subfigure}
  \caption{Pairwise comparisons between \textbf{WISE} and each baseline under remaining four metrics.}
  \label{fig:pairwise_all}
\end{figure*}

\section{Additional Results}
\label{app:expt}

\subsection{Additional Comparisons on Main Experiments}
\label{app:expt:main}

In this subsection, we further report pairwise comparisons between \textbf{WISE} and each baseline for NMI, Purity, ACC, and SWC. Each panel summarizes, across the six datasets, how often \textbf{WISE} outperforms a given baseline.

\paragraph{External Metrics.}
The overall picture is consistent with the main results: \textbf{WISE} consistently dominates the baselines on label-based criteria.
For NMI, \textbf{WISE} wins against k-Prototypes, IDC, and SAINT on all datasets, and wins on 5/6 datasets against TableDC and TELL, with clear average improvements.
For Purity, \textbf{WISE} again improves over IDC on all datasets, and achieves 5/6 wins against the remaining baselines, with smaller average margins, as expected for Purity, which can saturate when clusters are already label-dominant.
For ACC, \textbf{WISE} is uniformly strong: it wins on 6/6 datasets against k-Prototypes, IDC, TELL, and SAINT, and on 5/6 against TableDC, with the largest average gaps again appearing versus the deep baselines.

\paragraph{Intrinsic Metric (SWC).}
For SWC, \textbf{WISE} consistently outperforms IDC, TELL, and SAINT, and remains favorable to k-Prototypes (5/6).
TableDC is notably competitive on SWC: \textbf{WISE} wins on 3/6 datasets, ties on one, and trails on 2/6, with a modest average gap of $0.047$. This pattern is expected since SWC is a purely structure-driven criterion that directly rewards within-cluster compactness and between-cluster separation. Despite optimizing a different objective, \textbf{WISE} remains broadly comparable and still achieves a slight overall advantage in SWC.

\subsection{Quantitative Evaluation of Explanation Faithfulness}
\label{app:expt:faithfulness}

\begin{table}[t]
\centering
\caption{Quantitative evaluation of DFI faithfulness.}
\label{tab:dfi_faithfulness}
\begin{tabular}{lccc}
\toprule
\rowcolor[HTML]{D9EAD3}
\textbf{Feature subset} & \textbf{\#Features} & \textbf{Accuracy} & \textbf{Macro-F1} \\
\midrule
All features & 14 & \textbf{0.988} & \textbf{0.986} \\
\midrule
Top-3 DFI & 3 & \underline{0.984} & \underline{0.980} \\
Top-5 DFI & 5 & \textbf{0.988} & \textbf{0.986} \\
Top-10 DFI & 10 & \textbf{0.988} & \textbf{0.986} \\
\midrule
Random-3 & 3 & 0.536 & 0.300 \\
Random-5 & 5 & 0.660 & 0.494 \\
Random-10 & 10 & 0.873 & 0.814 \\
\bottomrule
\end{tabular}
\end{table}

We further quantify the faithfulness of DFI by testing whether the features identified as important by DFI are indeed sufficient to recover the cluster assignments produced by WISE. Concretely, we first compute a global DFI ranking by taking a cluster-size-weighted average of the cluster-level DFI scores. We then train a shallow decision tree to predict the final \textbf{WISE} cluster labels using only the selected features.

Table~\ref{tab:dfi_faithfulness} reports the results on Adult. The top-3 DFI features already retain nearly all predictive power, and the top-5 DFI features exactly match the performance of using all 14 features. In contrast, random feature subsets are substantially weaker, even when using 10 features. The random baselines are averaged over 30 trials. These results indicate that DFI successfully identifies a small subset of features that captures the key decision structure underlying the final clustering.

\end{document}